%% file: main.tex
\documentclass{article} 
\usepackage{iclr2025_conference,times}
\input{math_commands.tex}

\usepackage{hyperref}
\usepackage{url}
\usepackage{comment} 
\usepackage{amsmath, amssymb, amsthm}
\usepackage{graphicx}
\usepackage{tcolorbox}
\usepackage{booktabs, multirow}
\definecolor{dkred}{rgb}{0.8,0,0}
\definecolor{dkblue}{rgb}{0,0.08,0.45}

\theoremstyle{definition}
\newtheorem{assum}{Condition}[section]

\newtheorem{thm}{Theorem}[section]
\newtheorem{corr}{Corollary}[section]
\title{Cross-Domain Off-Policy Evaluation and Learning for Contextual Bandits}

\author{Yuta Natsubori\\
Hakuhodo DY Holdings, Inc.\\
\texttt{yuta.natsubori@hakuhodo.co.jp} \\
\And
Masataka Ushiku\\
Hakuhodo DY Holdings, Inc.\\
\texttt{masataka.ushiku@hakuhodo.co.jp} \\
\And 
Yuta Saito \\
Cornell University \\
\texttt{ys522@cornell.edu}
}

\iclrfinalcopy 
\begin{document}

\maketitle

\begin{abstract}
Off-Policy Evaluation and Learning (OPE/L) in contextual bandits is rapidly gaining popularity in real systems because new policies can be evaluated and learned securely using only historical logged data. However, existing methods in OPE/L cannot handle many challenging but prevalent scenarios such as few-shot data, deterministic logging policies, and new actions. In many applications, such as personalized medicine, content recommendations, education, and advertising, we need to evaluate and learn new policies in the presence of these challenges. Existing methods cannot evaluate and optimize effectively in these situations due to the notorious variance issue or limited/no exploration in the logged data. To enable OPE/L even under these unsolved challenges, we propose a new problem setup of \textit{Cross-Domain OPE/L}, where we have access not only to the logged data from the \textit{target domain} in which the new policy will be implemented but also to logged datasets collected from other domains. This novel formulation is widely applicable because we can often use historical data not only from the target hospital, country, device, or user segment but also from other hospitals, countries, devices, or segments. We develop a new estimator and policy gradient method to solve OPE/L by leveraging both target and source datasets, resulting in substantially enhanced OPE/L in the previously unsolved situations in our empirical evaluations.

\end{abstract}

\section{Introduction}
Many decision-making systems (e.g., recommendation, medication, budget allocation) interact with the environment through a contextual bandit process in which a policy observes the context, takes action, and obtains rewards. Off-Policy Evaluation and Learning (OPE/L) has gained attention as a technique for estimating and learning new policies without deploying them, using only historical logged data. We can find many real applications of these techniques since they do not require costly and risky online A/B testing and exploration to enable the data-driven policy evaluation and learning lifecycle~\citep{mehrotra2018towards,gilotte2018offline,saito2021evaluating,kiyohara2024towards}.

While recent advances have led to the development of a number of estimators and policy gradient methods~\citep{saito2021counterfactual,uehara2022review}, most of these are based on inverse propensity scoring (IPS), reward regression, or their mixture~\citep{dudik2014doubly,wang2017optimal,su2020doubly}. These estimators heavily rely on a theoretical assumption called \textit{common support} to provide a low-bias estimate. Due to the common support assumption, we can only evaluate and learn new policies regarding actions that have already been sufficiently explored by the logging policy~\citep{sachdeva2020off,felicioni2022off}. Therefore, in challenging but realistic cases where the logging policy is completely deterministic or there are new actions, existing methods simply cannot evaluate and choose under-explored and new actions at all due to the lack of their reward information in the historical data~\citep{sachdeva2020off}. In addition, the use of importance weighting often causes severe variance issues, particularly when the sample size is small, such as \textit{few-shot} data and the action space is large~\citep{saito2022off,cief2024learning,sachdeva2024off}.

There have been several previous efforts to enable OPE~\citep{felicioni2022off} and OPL~\citep{sachdeva2020off} under the violation of common support (also known as support deficiency), but they cannot handle deterministic logging policies and completely new actions. It might also be useful to use some structure in the action or reward space to relax the requirement of common support, as studied by~\cite{saito2022off,saito2023off,cief2024learning,taufiq2024marginal,sachdeva2024off,kiyohara2024off}, but this useful structure is not always learnable.

\begin{figure*}[t]
\label{illust}  \vspace{-7mm}
\begin{center}
\centering\includegraphics[width=0.7\columnwidth]{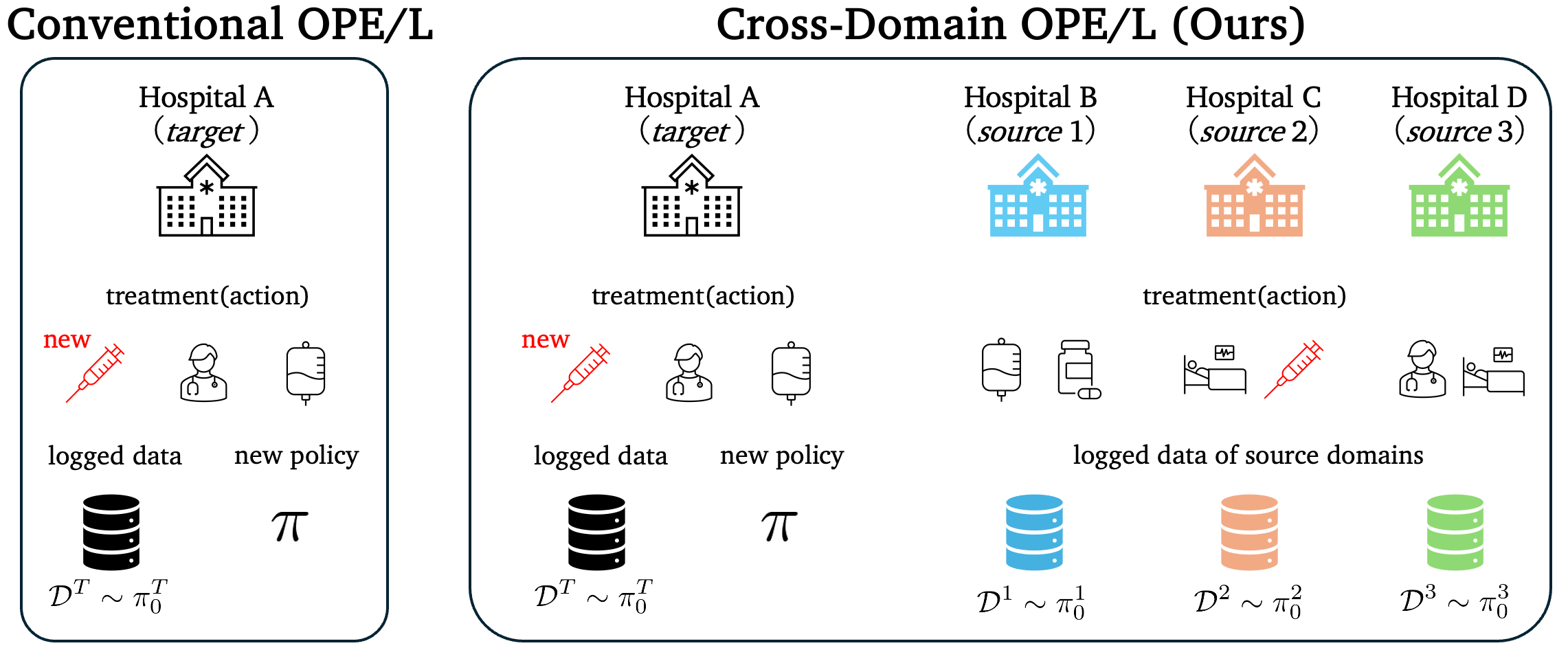} \vspace{-2mm}
\end{center}
\caption{Comparison of \textbf{Conventional OPE/L} and \textbf{Cross-Domain OPE/L (Ours)}.} \vspace{-5mm}
\end{figure*}

To enable effective OPE/L even with deterministic logging policies and new actions, we propose a new problem formulation called\textbf{ \textit{Cross-Domain OPE/L}} where we aim to evaluate and optimize the value of new policies under the target domain, but have access to historical logged data collected previously in source domains (as depicted in Figure~\ref{illust}). There are many cases where we have access to such multiple-logged datasets. In the medical field, an example scenario is when the impact of a treatment on a patient's prognosis is recorded in several hospitals of different sizes and with different patient demographics. Another example is in recommender systems where new content or function may become available in certain countries or for a subset of users (e.g., active members), and these can be source domains when we aim to perform OPE/L regarding new countries and other users, such as relatively new members. If we have access to these logged datasets collected in source domains with more data and exploration in the action space, we can leverage these data to perform OPE/L more effectively towards the target domain even when it has less logged data, a deterministic logging policy, and new actions that existing estimators cannot address.

After formally presenting the new formulation, we propose a new estimator, called \textit{\textbf{C}ross-domain \textbf{O}ff-\textbf{P}olicy \textbf{E}valuation (\textbf{COPE})} and respective policy gradient method to solve OPE/L by effectively leveraging useful information from source domains to estimate and optimize the value of new policies in the target domain. To achieve effective transfer from source domains, our methods are based on a decomposition of the expected reward function into the domain-cluster effect and domain-specific effect. The domain-cluster effect is a component in the reward function that domains in the same cluster have in common, while the domain-specific effect represents the causal effect not modeled merely by the domain-cluster effect. The COPE estimator unbiasedly estimates the domain-cluster effect by applying \textit{multiple importance weighting}~\citep{owen2013monte,agarwal2017effective} using data not only from the target domain but also leveraging data from the source domains that are in the same cluster as the target domain. It also deals with the domain-specific effect via reward regression using the logged data from the target domain, which reduces the bias of the estimator depending on the regression accuracy. We also extend the COPE estimator as a policy gradient estimator to perform OPL using both target and source data, enabling us to perform policy learning even under completely deterministic logging and new actions involved. Theoretical analysis shows that COPE has analyzable bias and, in particular, that COPE can be unbiased even when the target domain has a deterministic logging policy and new actions. Empirical evaluation using a real-world recommendation dataset demonstrates that COPE outperforms existing estimators and policy learning methods by properly leveraging data from both the target and source domains, particularly when there are few target data and many actions that are not previously explored in the target domain.

\textbf{Key Related Work.}
This section differentiates our work from two closely related studies. A more comprehensive overview of related work is available in Appendix~\ref{app:related}.

First, we discuss the work by \cite{uehara2020off}, which studied OPE/L under covariate shift, where the context distributions differ between the logged data ($p^{\text{hist}}(x)$) and the evaluation environment ($p^{\text{eval}}(x)$), while the reward distribution $p(r|x,a)$ remains unchanged. Under this setup, \cite{uehara2020off} proposes methods to estimate the value of new policies when deployed in the evaluation environment, using only the logged data collected under the historical distribution $p^{\text{hist}}(x)$. In contrast, we aim to estimate the same estimand, but leveraging data from both target and source domains (where there could be multiple source domains), with each domain having its own unique context and reward distributions. By leveraging this new setup, our primary goal is to address challenging scenarios in the target domain, such as new actions, deterministic logging, and extremely small logged data, which clearly differ from the motivations of \cite{uehara2020off}.

We also distinguish our contributions from those of~\cite{saito2023off}, which developed the OffCEM estimator to handle OPE in large action spaces within a single domain setup. Although our main idea of reward function decomposition is inspired by OffCEM, its application to solve the problem of Cross-Domain OPE/L would not be possible without our unique formulation. Moreover, we propose an extension of our estimator to an OPL method, whereas \cite{saito2023off} focused solely on the OPE problem. Therefore, our work is the first to formulate the problem of cross-domain OPE/L and leverage the respective version of reward function decomposition to solve non-trivial challenges, such as new actions and deterministic logging, and thus offers several unique contributions from both methodological and empirical angles.

Finally, we discuss an important distinction between our work and those that address limited overlap or deficient support issues~\citep{hansen2008prognostic,sachdeva2020off,wu2021beta,felicioni2022off}. Limited overlap refers to a situation where certain actions that can be taken under a new policy have zero probability of being observed under the logging policy. Limited overlap is problematic because, without any data about actions in the logged data, importance weighting techniques become biased. It is important to note that our work aims to address even more challenging scenarios compared to the general limited overlap problem, namely, completely deterministic logging and new actions. Deterministic logging refers to a situation where the logging policy selects a specific action with probability one, i.e., there is no stochasticity. This is more difficult because the typical limited overlap issue still allows the logging policy to be stochastic. Completely new actions present an even greater challenge, as they refer to actions $a$ that have zero probability of being observed in the logged data for any context. To the best of our knowledge, no previous work specifically addresses the issues of completely deterministic logging and new actions. We tackle these extremely challenging scenarios by newly formulating the problem of cross-domain OPE/L.

\section{Conventional Formulation} \label{sec:formulation}
We begin with describing the conventional formulation of OPE in the contextual bandit setting. Here, a decision maker repeatedly observes a context $x \in \calX$ drawn from an unknown distribution $p(x)$. Given a context $x$, a fixed and potentially stochastic policy $\pi(a|x)$ selects an action $a$ from a finite action space denoted by $\calA$. Reward $r$ is then observed following an unknown distribution $p(r |x, a)$ and we use $q(x, a):= \mE[r|x,a]$ to denote the expected reward given context and action. We define the performance measure of policy $\pi$ by the expected reward under its deployment as
\begin{equation}
\label{policy value}
    V(\pi):= \mathbb {E}_{p(x)\pi(a|x)p(r|x,a)}\lbrack r\rbrack=\mathbb {E}_{p(x)\pi(a|x)}\lbrack q(x,a)\rbrack,
\end{equation}
which is often called the \textit{policy value}. The logged bandit data we can use to perform OPE/L can be denoted by $\calD:=\{(x_i,a_i,r_i)\}^n_{i=1}$, which contains $n$ independent observations drawn from the data distribution induced by the logging policy $\pi_0$, i.e., $p(\calD) = \prod_{i=1}^n p(x_i)\pi_0(a_i|x_i)p(r_i|x_i,a_i)$.

The aim of OPE is to develop an estimator $\hat{V}$ that can accurately estimate the policy value of a new policy $\pi$ using only $\calD$. We measure the accuracy of $\hat{V}$ by the mean squared error (MSE) defined as
\begin{align*}
   \mathrm{MSE}(\hat{V}(\pi;\calD))&:=\mE_{p(\calD)}\lbrack (V(\pi) - \hat{V}(\pi;\calD))^2\rbrack = \mathrm{Bias}(\hat{V}(\pi;\calD))^2+\mathbb{V}_{\calD}\lbrack(\hat{V}(\pi;\calD))\rbrack, 
\end{align*}
where $\mE_{p(\calD)}[\cdot]$ takes the expectation over the distribution of $\calD$.

\textbf{Limitations of Existing Estimators.}
As an existing method for OPE, we first describe IPS~\citep{horvitz1952generalization}, which estimates the policy value by re-weighting the rewards as 
\begin{align}
     \hat{V}_{\rm{IPS}}(\pi;\calD):=\frac{1}{n}\sum_{i=1}^{n}\frac{\pi(a_{i}|x_{i})}{\pi_0(a_{i}|x_{i})}r_{i}=\frac{1}{n}\sum_{i=1}^{n}w(x_{i},a_{i})r_{i}, \label{eq:ips}
\end{align}
where $w(x,a):=\pi(a|x) / \pi_0(a|x)$ is called the \textit{importance weight}. 
It is widely known that IPS is unbiased, i.e., $\mE_{p(\calD)}[\hat{V}_{\rm{IPS}}(\pi;\calD)] = V(\pi)$, under the common support condition.

\begin{assum}[Common Support]
\label{assum2.1}
The logging policy $\pi_0$ is said to have common support for policy $\pi$ if $\pi(a|x)>0 \implies \pi_0(a|x)>0$ for all $a\in\calA$ and $x\in \calX$.
\end{assum}

This assumption for unbiasedness of IPS is never satisfied in situations when the logging policy $\pi_0$ is deterministic or there are new actions that were not available when the logged data was collected. Under the violation of Condition~\ref{assum2.1}, IPS produces the following bias~\citep{sachdeva2020off}
\begin{align}
    |\mathrm{Bias}(\hat{V}_{\rm{IPS}}(\pi;\calD))|=\mE_{p(x)}\lbrack \sum_{a\in \mathcal{U}_0(x,\pi_0)}\pi(a|x)q(x,a)\rbrack, \label{eq:bias-of-ips}    
\end{align}
where $\mathcal{U}_0(x,\pi_0):=\{a\in \calA \mid \pi_0(a|x)=0\}$ is the set of \textit{deficient} actions for context $x$ under $\pi_0$.
This bias is due to the fact that IPS cannot evaluate the actions that are not explored by the logging policy. To deal with this bias unavoidable for IPS, \cite{sachdeva2020off} suggests using a reward regression model $\hat{q}(x,a) \approx q(x,a)$ in the form of Doubly Robust (DR):
\begin{align}
    \hat{V}_{\rm{DR}}(\pi;\calD, \hat{q}):=\frac{1}{n}\sum_{i=1}^{n}\{w(x_{i},a_{i})(r_{i}-\hat{q}(x_{i},a_{i}))+ \mE_{\pi(a|x_i)}\lbrack \hat{q}(x_i,a)\rbrack \}. \label{eq:dr}
\end{align}
While the bias caused by support deficiency and the variance can be smaller if $\hat{q}(x, a)$ is accurate~\citep{dudik2014doubly,sachdeva2020off}, it is almost impossible to achieve such accuracy when only a small (like few-shot) dataset is available and there are new actions in the target domain.


\section{Cross-Domain Off-Policy Evaluation and Learning}
To achieve yet effective OPE/L even with no exploration and few-shot data in the target environment, this section proposes a new OPE problem that aims to estimate the value of a new policy, but we can have access not only to the historical data collected from the domain in which the new policy will be implemented (\textit{target} domain), but also the datasets from other domains (\textit{source} domains) that have varying data generation processes (DGPs). We also develop an estimator to effectively leverage logged data from both the target and source domains and extend it to a policy gradient method.

Here, we introduce $K$ different domains $\{k\}_{k=1}^K$ where the logged dataset is collected under potentially different logging policies in each domain. One out of $K$ domains is the target domain $T$, where we are interested in deploying a new policy, and the rest are source domains $S$. The DGP within each domain $k$ can be formulated as a respective contextual bandit process, i.e., we first observe the context $x^{k} \in \calX$ from an unknown $p^{k}(x)$. Given context $x^{k}$, logging policy $\pi_0^{k}(a|x)$ chooses action $a^{k}$, and then the reward $r^{k}$ is observed from an unknown $p^{k}(r|x,a)$. Following this process, we observe logged bandit data denoted by $\calD^{k} := \{(x_i^{k},a_i^{k},r_i^{k})\}_{i=1}^{n^{k}} \sim p(\calD^{k}) = \prod_{i=1}^{n_k} p^{k}(x_i) \pi_0^{k}(a_i|x_i) p^{k}(r_i|x_i,a_i)$ for each domain, where $n^{k}$ is the sample size of domain $k$. Note that we sometimes use $\calD:=\bigcup_{k=1}^{K}\calD^{k}$ to denote available logged data across all domains. 

We are now interested in evaluating a new policy $\pi(a | x)$ in the target domain whose policy value is defined using the target distributions, $(p^T(x), p^T(r|x,a))$, as follows.
\begin{equation}
    \label{policy value multi}
    V^T(\pi):= \mathbb {E}_{p^T(x)\pi(a|x)p^T(r|x,a)}\lbrack r\rbrack=\mathbb {E}_{p^T(x)\pi(a|x)}\lbrack q^T(x,a)\rbrack,
\end{equation}
where $q^T(x, a) := \mE_{p^T(r|x,a)}\lbrack r|x, a\rbrack$ denotes the expected reward in the target domain given context $x$ and action $a$. Note here that we can regard the conventional formulation described in Section~\ref{sec:formulation} as a special case where there exists only the target domain. Thus, IPS and DR can readily be defined in our formulation by using only the target domain data $\calD^T$ as $\hat{V}_{\rm{IPS}}(\pi;\calD^T)$ and $\hat{V}_{\rm{DR}}(\pi;\calD^T,\hat{q}^T)$. 

In contrast, a possible approach to perform the existing estimators with source domain data is to naively integrate the datasets from all domains when performing IPS or DR as below.
\begin{align}
    \hat{V}_{\rm{IPS-ALL}}(\pi;\calD)
    &:=\frac{1}{N}\sum_{k=1}^K\sum_{i=1}^{n^{k}}w^{k}(x_{i}^{k},a_{i}^{k})r_{i}^{k}, \label{eq:ips-all} \\
    \displaystyle \hat{V}_{\rm{DR-ALL}}(\pi;\calD)
    &:=\frac{1}{N} \sum_{k=1}^K\sum_{i=1}^{n^{k}} \Big\{w^{k}(x_{i}^{k},a_{i}^{k})(r_{i}^{k}-\hat{q}(x_{i}^{k},a_{i}^{k}))+ \mE_{\pi(a|x_i^k)}\lbrack \hat{q}(x_i^k,a) \rbrack \Big\}, \label{eq:dr-all}
\end{align}
where $w^{k}(x,a):=\pi(a|x) / \pi_0^{k}(a|x)$ and $N:=\sum_{k=1}^K n^{k}$. 
IPS-ALL and DR-ALL use information explored in source domains to estimate the policy value in the target domain with smaller variance by using more data. However, they completely ignore the differences in DGPs across different domains, possibly introducing substantial bias as we will demonstrate in our experiments.

\subsection{The COPE Estimator}
This section proposes a new estimator called COPE, which effectively integrates logged bandit data from target and source domains to estimate the value of new policies in the target domain even with less data and no exploration within the target domain. The key to deriving our estimator is how to pool information from source domains to deal with the aforementioned challenging scenarios while not introducing much bias by taking into account the differences in DGPs. We achieve this by the following decomposition of the expected reward function.
\begin{align}
    q^k(x,a) = \underbrace{g(x,a,\phi(k))}_{\textit{domain-cluster effect}} + \underbrace{h(x,a,k)}_{\textit{domain-specific effect}}, \label{eq:decomposition}
\end{align}
where $\phi(k)$ is a function to cluster similar domains. 
We can, for example, obtain a clustering function by heuristically performing an off-the-shelf clustering algorithm based on the empirical average of the rewards of each domain ($\bar{r}^k := \sum_{i=1}^{n_k} r_i/n_{k}$) as the domain embedding, which generally works well in our experiments. 
The domain-cluster effect $g(x,a,\phi(k))$ in Eq.~\eqref{eq:decomposition} is a factor of the expected reward function that domains in the same cluster have in common, while the domain-specific effect $h(x,a,k)$ is the effect not solely modeled by the domain-cluster effect and thus is dependent on each individual domain $k$. For example, in a medical problem, the cluster effect could capture the shared effect of medical treatment within similar hospitals, while the domain-specific effect models how effective an action is for a particular hospital compared to the average in the cluster. Based on this decomposition, we design a new estimator by applying different estimation strategies between the domain-cluster ($g$) and specific ($h$) effects. Specifically, our estimator unbiasedly estimates the domain-cluster effect by applying the \textit{multiple importance weighting} technique~\citep{owen2013monte,agarwal2017effective} and the domain-specific effect by reward regression as 
\begin{align}
    \hat{V}_{\rm{COPE}}(\pi;\calD^{\phi(T)}) &:=
    \frac{1}{n^{\phi(T)}}\sum_{k \in \phi(T)}\sum_{i=1}^{n^{k}}\frac{\pi(a_{i}^{k}|x_{i}^{k})}{\textcolor{dkgreen}{p^{\phi(T)}(a_{i}^{k}|x_{i}^{k})}}(r_{i}^{k}-\hat{q}^T(x_{i}^{k},a_{i}^{k})) \notag \\
    &\hspace{3cm} +\frac{1}{n^{T}}\sum_{i=1}^{n^{T}}\sum_{a^T \in \calA}\pi(a^T|x_{i}^{T})\hat{q}^T(x_{i}^{T},a^T),  \label{eq:COPE}
\end{align}
where $\calD^{\phi(T)}$ represents the logged data of domains belonging to the same cluster as the target domain (i.e., $\phi(T)$, which we call the \textit{target cluster}), and $n^{\phi(T)}:=\sum_{k \in \phi(T)}n^{k}$ is the total sample size of logged data within the target cluster $\phi(T)$. 
It should also be noted that we use the joint distribution of context and action under the logging policy averaged within the target cluster $\phi(T)$ in the denominator of the first term of the COPE estimator, which is formally defined as
\begin{align}
    \textcolor{dkgreen}{p^{\phi(T)}(a|x)} := \frac{1}{n^{\phi(T)}}\sum_{k \in \phi(T)}n^{k} \frac{p^{k}(x)}{p^{T}(x)}\pi_0^{k}(a|x), \label{eq:average-iw}
\end{align}
which results in what is called multiple importance weighting~\citep{owen2013monte,agarwal2017effective}, i.e., $\pi(a^{k}|x^{k}) /  p^{\phi(T)}(a^{k}|x^{k})$ in COPE to deal with varying distributions in the target cluster. In our experiments, we estimate the weight based only on observable logged data using techniques from density ratio estimation~\citep{sugiyama2012density,kanamori2012statistical}, as detailed in Appendix~\ref{app:dre}.

Our estimator is expected to substantially outperform the naive application of existing estimators when there is only limited/no exploration and less data in the target domain. First, COPE has a lower bias than IPS and DR by estimating the value of deficient and new actions by using data from source domains that are in the same cluster as the target domain, i.e., $\phi(T)$. COPE also provides substantial variance reduction in the case of few-shot data in the target domain by transferring information from source domains, while IPS and DR use only the target domain data. Moreover, COPE has lower bias than IPS-ALL and DR-ALL because it does not naively integrate data from all domains as IPS-ALL and DR-ALL, but rather applies multiple importance weighting within the data in the target cluster, with adjustments for the varying context and action distributions across different domains.

In the following, we analyze the statistical properties of COPE and discuss its intriguing interpretation as a strict generalization of existing estimators. 

\begin{assum}[Common Cluster Support]
\label{ass:common-cluster-support}
The logging policy $\pi_0^T$ is said to have common cluster support for policy $\pi$ if $\pi(a|x) > 0 \implies p^{\phi(T)}(a|x) > 0$ for all $a \in \calA$ and $x \in \calX$.
\end{assum}

Note that, when Condition~\ref{assum2.1} is true, Condition~\ref{ass:common-cluster-support} is always true, indicating that Condition~\ref{ass:common-cluster-support} is more relaxed. In particular, when the logging policy is deterministic or there are new actions in the target domain, Condition~\ref{assum2.1} is never satisfied, while Condition~\ref{ass:common-cluster-support} can still be satisfied when there is some exploration in the target cluster $\phi(T)$ (not necessarily in the target domain).

Based on this milder support condition, we provide the bias of COPE as below.

\begin{thm}[Bias of COPE]
\label{thm:bias-of-COPE}
Under Condition~\ref{ass:common-cluster-support}, COPE has the following bias. 
\begin{align}
    \mathbb{E}_{p^T(x)\pi(a|x)} \bigg[ \bigg\{ \bigg( \sum_{k\in\phi(T)} \underbrace{\frac{n^k}{n^{\phi(T)}}\frac{p^{k}(x)}{p^{T}(x)}\frac{\pi_0^{k}(a|x)}{p^{\phi(T)}(a|x)}}_{:=\eta(k)}\Delta_{q,\hat{q}}^k(x,a) \bigg)-\Delta_{q,\hat{q}}^T(x,a) \bigg\} \bigg], \label{eq:bias_of_COPE}
\end{align}
where $\Delta_{q,\hat{q}}^k(x,a) := q^{k}(x, a) - \hat{q}^{k}(x, a)$ is an estimation error of the regression model $\hat{q}$ given context $x$ and action $a$ for domain $k$. Note that $\sum_{k\in\phi(T)} \eta(k) = 1$. See Appendix \ref{proof3.3} for the proof.
\end{thm}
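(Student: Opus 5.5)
Throughout I treat $\hat{q}^T$ as fixed, i.e., fitted independently of $\calD^{\phi(T)}$ (for instance by cross-fitting). I read $\hat{q}^k$ in $\Delta^k_{q,\hat{q}}$ as the model $\hat{q}^T$ used inside COPE, evaluated on data from domain $k$. The plan is to compute the expectation of each of the two terms of $\hat{V}_{\rm{COPE}}$ in Eq.~\eqref{eq:COPE} separately, subtract $V^T(\pi)$ from Eq.~\eqref{policy value multi}, and rearrange everything into a single expectation over $p^T(x)\pi(a|x)$.

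\textbf{Second term.} By linearity and the i.i.d.\ sampling of $x_i^T \sim p^T(x)$, its expectation is $\mE_{p^T(x)\pi(a|x)}[\hat{q}^T(x,a)]$. Subtracting this from $V^T(\pi)=\mE_{p^T(x)\pi(a|x)}[q^T(x,a)]$ yields the term $-\mE_{p^T(x)\pi(a|x)}[\Delta^T_{q,\hat{q}}(x,a)]$ in Eq.~\eqref{eq:bias_of_COPE}.

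\textbf{First term.} For each $k\in\phi(T)$, the inner sum has expectation
\[
n^k\,\mE_{p^k(x)\pi_0^k(a|x)}\Big[\tfrac{\pi(a|x)}{p^{\phi(T)}(a|x)}\big(q^k(x,a)-\hat{q}^T(x,a)\big)\Big],
\]
obtained by first taking the expectation over $r$ given $(x,a)$, which replaces $r$ with $q^k(x,a)$. Writing this as a sum over $a$ and an integral over $x$, I change measure from $p^k(x)$ to $p^T(x)$, multiplying and dividing by $p^T(x)$. This produces the integrand
\[
p^T(x)\,\pi(a|x)\cdot\frac{n^k}{n^{\phi(T)}}\frac{p^k(x)}{p^T(x)}\frac{\pi_0^k(a|x)}{p^{\phi(T)}(a|x)}\,\Delta^k_{q,\hat{q}}(x,a)
=p^T(x)\,\pi(a|x)\,\eta(k)\,\Delta^k_{q,\hat{q}}(x,a)
\]
after dividing by $n^{\phi(T)}$. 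Summing over $k\in\phi(T)$ and combining with the second term gives exactly Eq.~\eqref{eq:bias_of_COPE}.

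\textbf{Where Condition~\ref{ass:common-cluster-support} enters.} The change of measure requires care about supports. Condition~\ref{ass:common-cluster-support} guarantees that $p^{\phi(T)}(a|x)>0$ whenever $\pi(a|x)>0$. Hence no mass of $\pi$ is lost: the pairs $(x,a)$ with $\pi(a|x)>0$ but $p^{\phi(T)}(a|x)=0$, which would produce a deficient-support term as in Eq.~\eqref{eq:bias-of-ips}, do not exist.

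I will restrict to contexts with $p^T(x)>0$, where $p^{\phi(T)}$ is defined. Terms with $p^k(x)>0$ but $p^T(x)=0$ must also be handled. I would either assume $p^k \ll p^T$, which is implicit in the definition in Eq.~\eqref{eq:average-iw}, or note that the weight is defined relative to $p^T$, so such $x$ contribute zero under the stated convention.

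The identity $\sum_{k}\eta(k)=1$ follows immediately by substituting the definition in Eq.~\eqref{eq:average-iw} into the denominator. I expect this support and convention bookkeeping, rather than the algebra, to be the only delicate step.
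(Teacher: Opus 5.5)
Your proposal is correct and follows the same route as the paper's proof. Like the paper, you take expectations of the two terms separately, replace $r$ by $q^k$, and use the definition in Eq.~\eqref{eq:average-iw} to rewrite the $p^k(x)\pi_0^k(a|x)$-expectation of $\pi(a|x)/p^{\phi(T)}(a|x)$ as a $p^T(x)\pi(a|x)$-expectation weighted by $\eta(k)$; the paper also uses a single regression model $\hat{q}$ in both terms, which matches your reading of $\hat{q}^k$. Your support bookkeeping (Condition~\ref{ass:common-cluster-support} making $\eta(k)$ well defined $p^T\pi$-a.e., and $p^k \ll p^T$) is left implicit in the paper, which simply sums over a discrete $\calX$.
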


Theorem~\ref{thm:bias-of-COPE} suggests that the bias is characterized by the difference between the weighted average of the regression error in the target cluster ($ \sum_{k\in\phi(T)} \eta(k) \Delta_{q,\hat{q}}^k(x,a)$) and the error in the target domain ($ \Delta_{q,\hat{q}}^T(x,a) $). Therefore, if $ \Delta_{q,\hat{q}}^k(x,a) \approx \Delta_{q,\hat{q}}^T(x,a) \implies  q^k(x,a) - q^T(x,a) \approx \hat{q}^k(x,a) - \hat{q}^T(x,a) $ holds for $\forall k \in \phi(T)$ (i.e., $\hat{q}^k(x,a)$ accurately preserves the relative reward differences of actions, $q^k(x,a) - q^T(x,a)$, within $\phi(T)$), the bias of COPE becomes small. This is because COPE already unbiasedly estimates the domain-cluster effect via its multiple importance weighting, and thus it is sufficient for the regression model to estimate only relative reward differences of actions between domains in $\phi(T)$ and the target domain $T$ to make the global estimate low-bias. More formally, Theorem~\ref{thm:bias-of-COPE} implies the unbiasedness of COPE under the \textit{CPC} condition.

\begin{assum}[Conditional Pairwise Correctness; CPC]
\label{ass:cpc}
A regression model $\hat{q}^k(x,a)$ and domain clustering function $\phi(k)$ satisfy conditional pairwise correctness if the following holds true:
    $$q^k(x,a) - q^T(x,a)  = \hat{q}^k(x,a) - \hat{q}^T(x,a),$$
for all $x \in \calX$, $a \in \calA$, and $k$ s.t. $k \in \phi(T)$.
\end{assum}

\begin{corr}
\label{prop3.3}
Under Conditions~\ref{ass:common-cluster-support} and~\ref{ass:cpc}, COPE is unbiased, i.e., $\mE_{\calD^{\phi(T)}}[\hat{V}_{\rm{COPE}}(\pi;\calD^{\phi(T)})] = V(\pi).$ See Appendix \ref{proof3.3} for the proof.
\end{corr}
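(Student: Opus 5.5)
The plan is to derive Corollary~\ref{prop3.3} directly from Theorem~\ref{thm:bias-of-COPE}. Condition~\ref{ass:common-cluster-support} is assumed, so the bias of COPE equals the expression in Eq.~\eqref{eq:bias_of_COPE}. It therefore suffices to show that, under Condition~\ref{ass:cpc}, the integrand
\[
\sum_{k\in\phi(T)} \eta(k)\,\Delta_{q,\hat{q}}^k(x,a)-\Delta_{q,\hat{q}}^T(x,a)
\]
vanishes for every $(x,a)$ in the support of $p^T(x)\pi(a|x)$.

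\textbf{Step 1.} Rearrange Condition~\ref{ass:cpc}. The identity $q^k(x,a)-q^T(x,a)=\hat{q}^k(x,a)-\hat{q}^T(x,a)$ is equivalent to
\[
q^k(x,a)-\hat{q}^k(x,a)=q^T(x,a)-\hat{q}^T(x,a),
\]
that is, $\Delta_{q,\hat{q}}^k(x,a)=\Delta_{q,\hat{q}}^T(x,a)$ for all $x,a$ and all $k\in\phi(T)$.

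\textbf{Step 2.} Substitute this into the weighted sum, giving $\sum_{k\in\phi(T)}\eta(k)\Delta^k_{q,\hat{q}}(x,a)=\Delta^T_{q,\hat{q}}(x,a)\sum_{k\in\phi(T)}\eta(k)$.

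\textbf{Step 3.} Check that $\sum_{k\in\phi(T)}\eta(k)=1$ pointwise wherever $p^{\phi(T)}(a|x)>0$. This follows immediately from the definition in Eq.~\eqref{eq:average-iw}, since the numerators $\frac{n^k}{n^{\phi(T)}}\frac{p^k(x)}{p^T(x)}\pi_0^k(a|x)$ summed over $k\in\phi(T)$ reproduce exactly $p^{\phi(T)}(a|x)$.

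\textbf{Step 4.} Conclude that the integrand is identically zero, so the bias is zero and $\mE_{\calD^{\phi(T)}}[\hat{V}_{\rm{COPE}}(\pi;\calD^{\phi(T)})]=V^T(\pi)$, the target policy value written $V(\pi)$ in the statement.

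The only delicate point is Step 3, namely that $\eta(k)$ is well defined where it matters. Condition~\ref{ass:common-cluster-support} ensures $p^{\phi(T)}(a|x)>0$ whenever $\pi(a|x)>0$. Pairs with $\pi(a|x)=0$ carry zero mass under the outer expectation $\mE_{p^T(x)\pi(a|x)}$, so the normalization $\sum_k\eta(k)=1$ holds on the entire support of the expectation. Implicitly we also need $p^T(x)>0$ on that support, which holds automatically because $x$ is drawn from $p^T$. No other obstacles are expected; all the real work is contained in Theorem~\ref{thm:bias-of-COPE}.
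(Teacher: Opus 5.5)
Your proposal is correct and matches the paper's proof. The appendix likewise starts from the bias expression of Theorem~\ref{thm:bias-of-COPE}, uses Condition~\ref{ass:cpc} to replace each $\Delta^k_{q,\hat{q}}$ by $\Delta^T_{q,\hat{q}}$, factors it out, and concludes from $\sum_{k\in\phi(T)}\eta(k)=1$ that the bias vanishes. Your remarks on where $\eta(k)$ is well defined (via Condition~\ref{ass:common-cluster-support} and $p^T(x)>0$ on the support) and on reading $V(\pi)$ as $V^T(\pi)$ make explicit what the paper leaves implicit.
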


Condition~\ref{ass:cpc} requires that the regression model $\hat{q}$ should only correctly preserve the relative reward difference $q^k(x,a) - q^T(x,a)$ between domains in $\phi(T)$ and the target domain $T$. Therefore, it does not necessitate an accurate estimate of the absolute value of the reward function, which is a general requirement for regression-based estimators such as the direct method (DM). Note that CPC needs only an accurate estimation of the relative reward difference within the target cluster, and thus it does not require anything about source domains outside the particular cluster.

It is important to clarify that we do not expect Condition 3.2 to hold in practice. It is simply a condition that helps to understand when COPE can be unbiased. This is why we first present Theorem 3.1, which characterizes the bias of COPE without assuming Condition 3.2. It is also important to note that in challenging cases, such as deterministic logging and the presence of new actions, existing estimators, including IPS and DR, produce much bias, but our estimator leverages source domain data and is much more robust to the bias arising due to those challenges, as we will demonstrate.

It is also worth mentioning that the size of the target cluster plays a crucial role in deciding the bias-variance tradeoff of COPE. When $|\phi(T)|$ is small, the bias of the estimator becomes small. This is because decreasing the cluster size makes CPC milder. In the extreme case where $|\phi(T)| = 1$, COPE becomes unbiased irrespective of the accuracy of the regression model because CPC requires nothing in that case. In contrast, the variance of COPE is likely to increase with a small cluster because it leads to a higher variation of its importance weights.

Here, we also provide an intriguing interpretation of COPE as a spectrum of DR and DR-ALL, with the number of domains in the cluster $|\phi(T)|$ being a lever of the tradeoff. More specifically, in the extreme case with $|\phi(T)|=1$, meaning that there is only the target domain in the cluster, COPE reduces to DR because the averaged joint distribution $p^{\phi(T)}(a|x)$ reduces to the logging policy of the target domain, i.e., $\pi_0^T(a|x)$. On the other hand, when $|\phi(T)|=K$, meaning that the cluster $\phi(T)$ contains all $K$ domains, COPE reduces to DR-ALL. As a strict generalization of different versions of DR, COPE never performs worse than these existing methods with a good (if not perfect) tuning of the cluster size $|\phi(T)|$. Note that the cluster size $|\phi(T)|$ is a hyperparameter of COPE, and we can tune it based only on available logged data by using recent advancements in the relevant literature~\citep{su2020adaptive,udagawa2023policy,felicioni2024autoope,cief2024cross}.

\subsection{Extension to Cross-Domain Off-Policy Learning}
In addition to the OPE counterpart, we can formulate the problem of learning a new policy to optimize the expected reward in the target domain as $\textcolor{dkgreen}{\max_{\theta} V^T(\pi_{\theta})}$ where $\theta \in \mathbb{R}^{d}$ is the policy parameter. 
A typical approach to solving the policy learning problem is the policy-based approach, which updates the policy parameter by iterative gradient ascent as $\theta_{t+1} \leftarrow \theta_{t} + \nabla_{\theta}V^T(\pi_{\theta})$. 
Since the true gradient $\nabla_{\theta}V^T(\pi_{\theta})=\mE_{p^T(x)\pi_{\theta}(a|x)}[q^T(x,a)\nabla_{\theta}\log\pi_{\theta}(a|x)]$ is unknown, we need to estimate it from observable data, which has been done by applying IPS or DR in the existing literature~\citep{su2019cab,metelli2021subgaussian}. 
However, similarly to the case with OPE, under the violation of common support with deterministic logging and new actions, policy gradient estimators based on IPS and DR produce bias.  Moreover, in the presence of new actions that become newly available when learning a new policy for the target domain, these conventional methods cannot evaluate and choose such new actions at all, even though some of them might have high expected rewards.

To solve these seemingly intractable problems in OPL, we can indeed readily extend COPE as a policy-gradient estimator to learn a new policy that optimizes the policy value of the target domain. 
\begin{align}
    \nabla_{\theta}\hat{V}^T_{\rm{COPE-PG}}(\pi_{\theta};\mathcal{D}^{\phi(T)})
    &:= \frac{1}{n^{\phi(T)}}\sum_{k \in \phi(T)}\sum_{i=1}^{n^{k}}\frac{\pi_{\theta}(a_{i}^{k}|x_{i}^{k})}{p^{\phi(T)}(a_{i}^{k}|x_{i}^{k})}(r_{i}^{k}-\hat{q}(x_{i}^{k},a_{i}^{k}))\nabla_{\theta}\mathrm{log}\pi_{\theta}(a_{i}^{k}|x_{i}^{k}) \notag \\
    & \qquad +\frac{1}{n^{T}}\sum_{i=1}^{n^{T}}\mathbb{E}_{\pi_{\theta}(a^T|x_{i}^{T})}[\hat{q}(x_{i}^{T},a^T)\nabla_{\theta}\mathrm{log}\pi_{\theta}(a^T|x_{i}^{T})] \label{eq:COPE-PG}
\end{align}
As already discussed, our policy-gradient estimator in Eq.~\eqref{eq:COPE-PG} particularly uses data from source domains that belong to the target cluster through multiple importance weighting, enabling to even learn the value of actions with little or no previous exploration.

\section{Empirical Analysis}
\label{others}

\textbf{Dataset.} This section empirically demonstrates the advantages of COPE and COPE-PG against existing ideas on a real-world public dataset called KuaiRec~\citep{gao2022kuairec} collected from a recommendation system on a video-sharing app.
The small matrix of the dataset consists of 1,411 users (denoted as $u \in \mathcal{U}$), 3,327 items, and 4,676,570 interactions, with a density of 99.6\%, which enables OPE/L experiments without synthetic reward functions. We randomly select 30 actions that have at least one interaction with all users for our experiments. We use the user features and watch ratio recorded in the original data as the context $x_u$ and expected reward $q(x_u,a)$, respectively. 

\textbf{Setup.} To simulate our problem of cross-domain OPE/L, we sample users from a domain-specific distribution $p^{k}(u)$ for each domain $k$, which is defined as $p^{k}(u) := \frac{\mathrm{exp}(\alpha^{k}\cdot \bar{q}(u))}{\sum_{u'\in \mathcal{U}}\mathrm{exp}(\alpha^{k}\cdot \bar{q}(u'))},$ where $\bar{q}(u) := (1/|\calA|)\sum_{a\in \calA}q(x_u,a)$ and $\alpha^{k}$ is a parameter that controls the user distribution in each domain. A domain with a large positive $\alpha^{k}$ has a higher density of users with a larger $q(u)$. We use different values of $\alpha^{k}$ for different domains to vary the context distributions across domains.

We sample an action $a^{k}$ based on the domain-specific logging policy $\pi_0^{k}$, which is defined as below.
\begin{align}
\label{logging policy}
    \pi_0^{k}(a\,|\,x_u):=\frac{\mathrm{exp}(\beta^{k}\cdot (q(x_u,a)+\eta_{u,a}))}{\sum_{a'\in \calA}\mathrm{exp}(\beta^{k}\cdot (q(x_u,a')+\eta_{u,a'}))},
\end{align}
where $\beta^{k}$ and $\eta_{u,a}$ are sampled from a uniform distribution within range $[-2.0,2.0]$ and $[-3.0,3.0]$, respectively. 
In contrast, the new policy $\pi$ is defined via the epsilon-greedy rule as
\begin{align}
    \pi(a\,|\,x)= (1-\epsilon) \cdot \mathbb{I} \{ a= \mathrm{argmax}_{a'\in A} \, q(x_u,a')\}+ \epsilon / |\calA|, \label{eq:eval-policy}
\end{align}
where $\epsilon \in [0, 1]$ controls the quality of $\pi$ and we set $\epsilon = 0.2$ as default. We sample the reward $r^{k}$ from a normal distribution with mean $q(x_u, a)$ and standard deviation $\sigma = 1$. Iterating this procedure $n^{k}$ times in each domain generate $\calD^{k}$ with $n^{k}$ independent copies of $(u^{k}, x^{k}_u, a^{k}, r^{k})$.

\begin{figure}[t]
\centering
\begin{center}
    \centering\includegraphics[width=1\columnwidth]{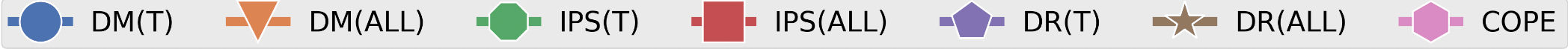}
\end{center}

\begin{minipage}[b]{0.325\columnwidth}
    \centering
    \includegraphics[width=0.9\columnwidth]{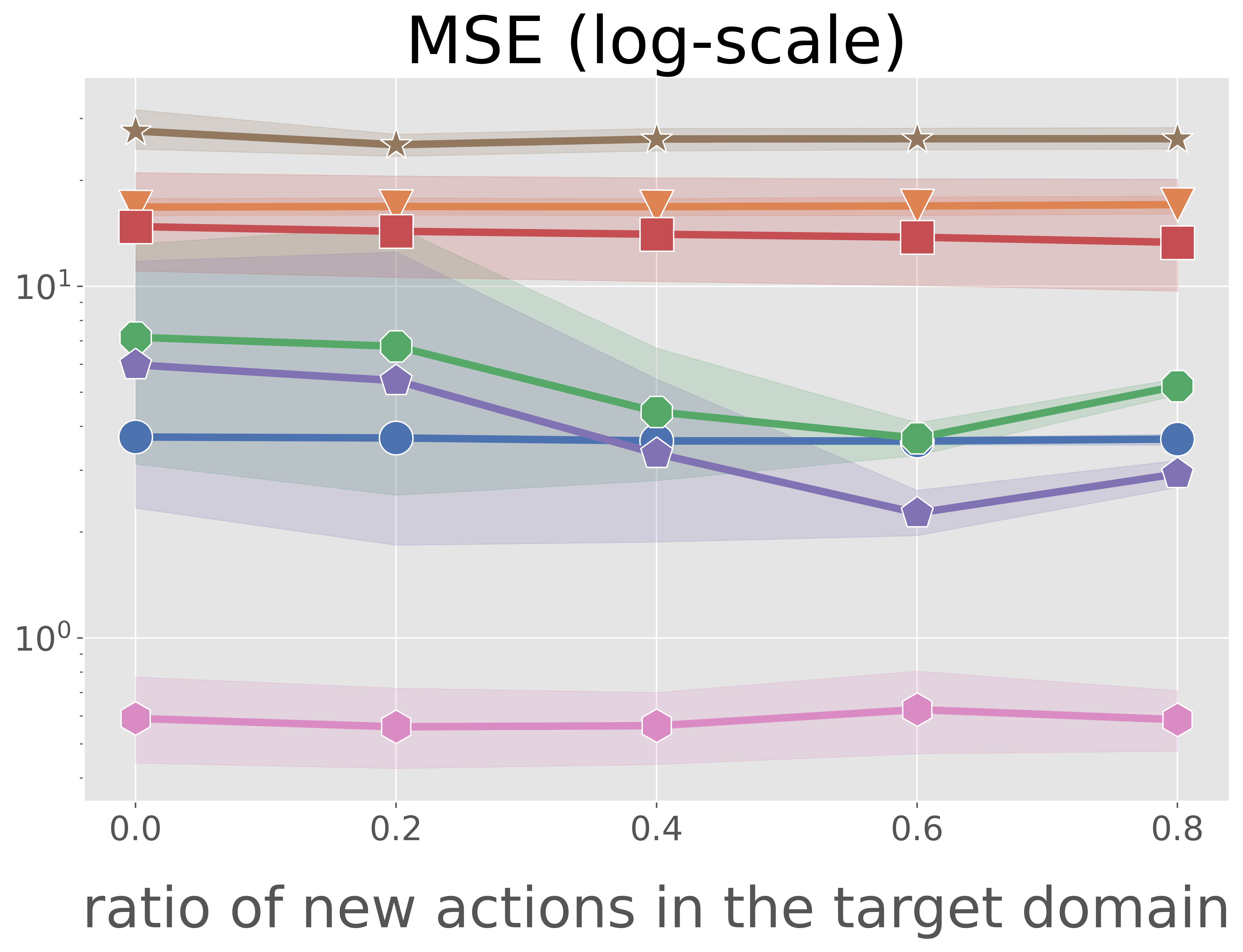}
\end{minipage} 
\begin{minipage}[b]{0.325\columnwidth}
    \centering
    \includegraphics[width=0.9\columnwidth]{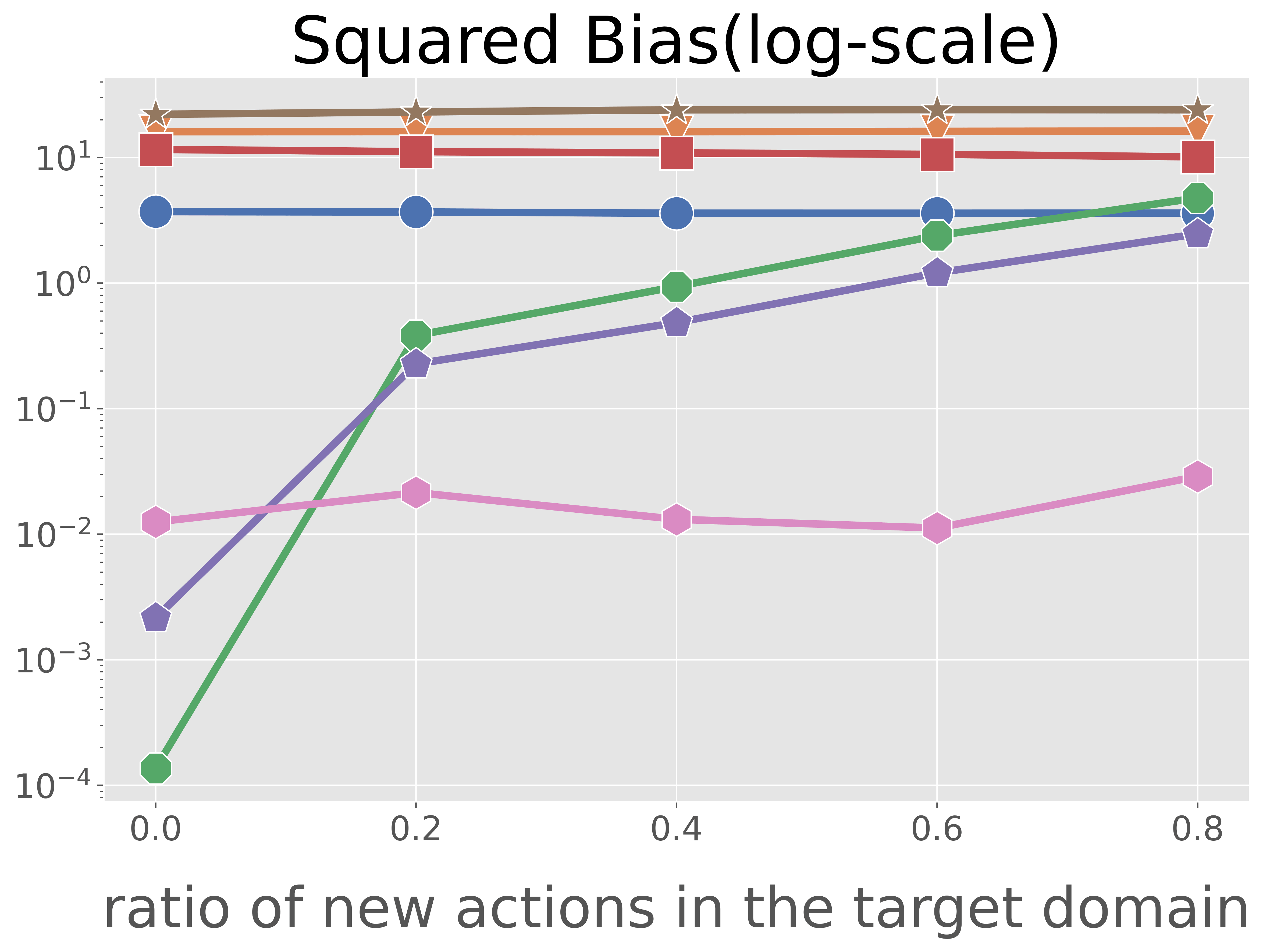}
\end{minipage}
\begin{minipage}[b]{0.325\columnwidth}
    \centering
    \includegraphics[width=0.9\columnwidth]{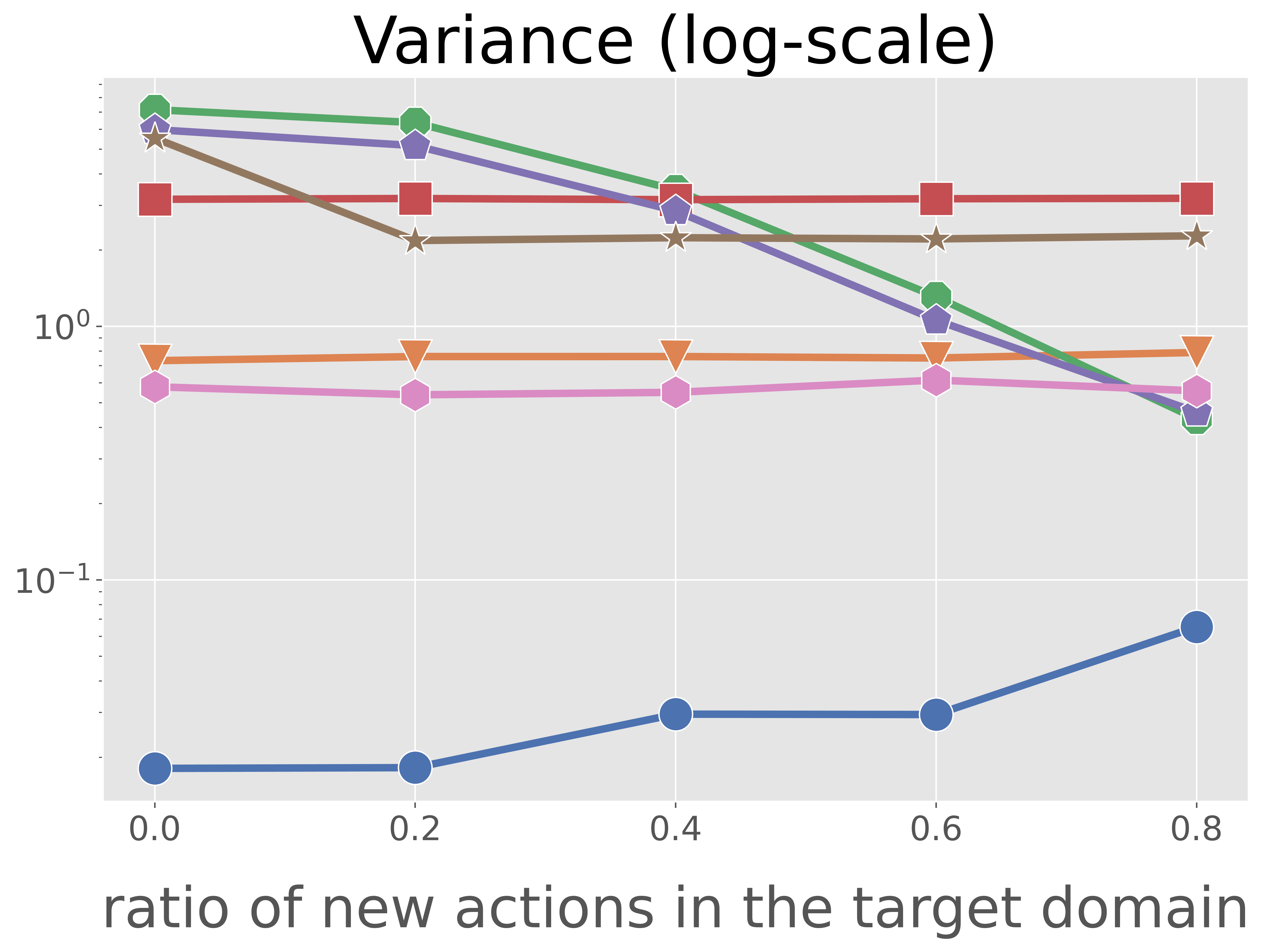}
\end{minipage} \vspace{-1mm}
\caption{MSE(\textbf{left}), Squared Bias(\textbf{center}), and Variance(\textbf{right}) with varying ratios of new actions in the target domain.}  \vspace{2mm}
\label{fig2} 

\begin{minipage}[b]{0.325\columnwidth}
    \centering
    \includegraphics[width=0.9\columnwidth]{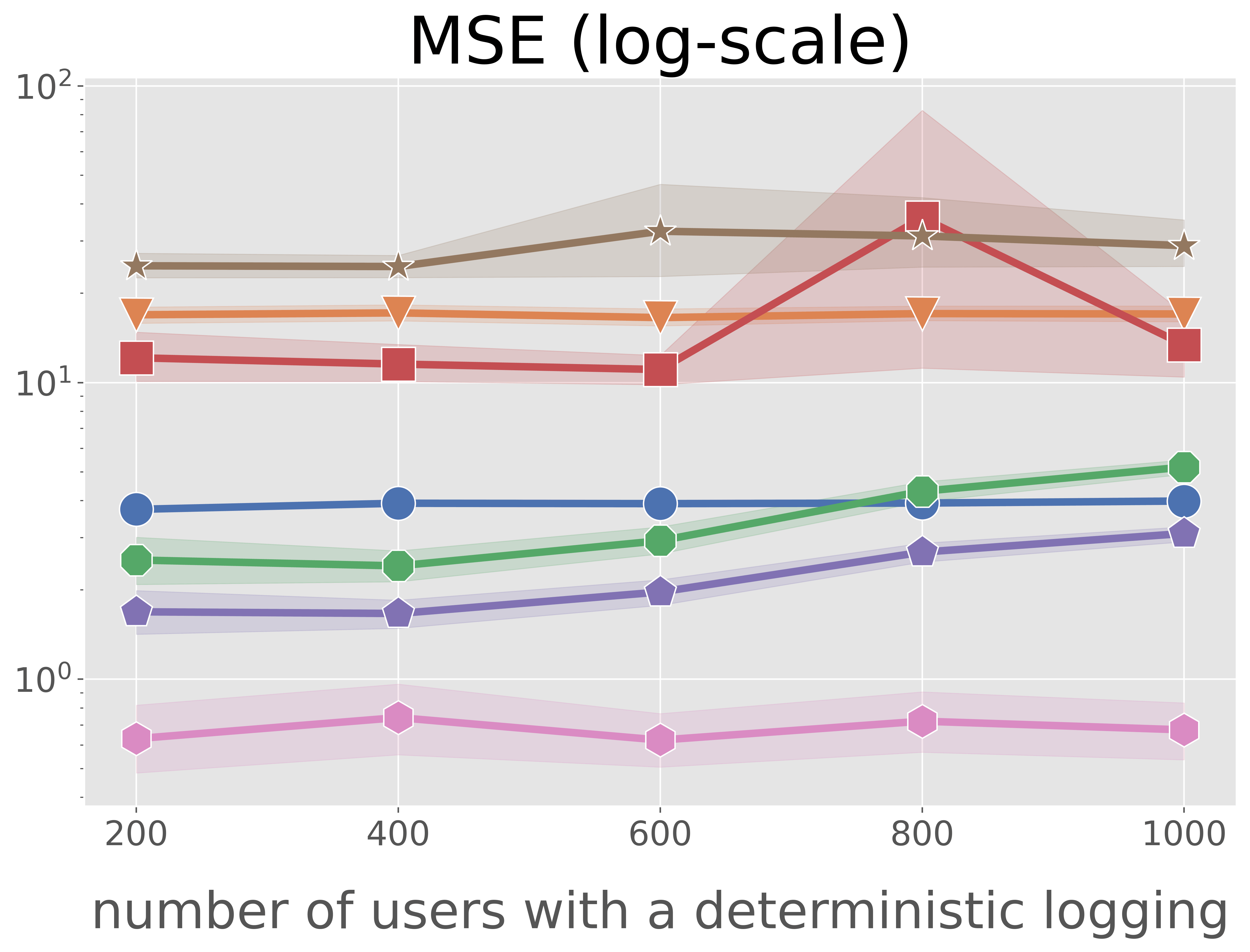}
\end{minipage}
\begin{minipage}[b]{0.325\columnwidth}
    \centering
    \includegraphics[width=0.9\columnwidth]{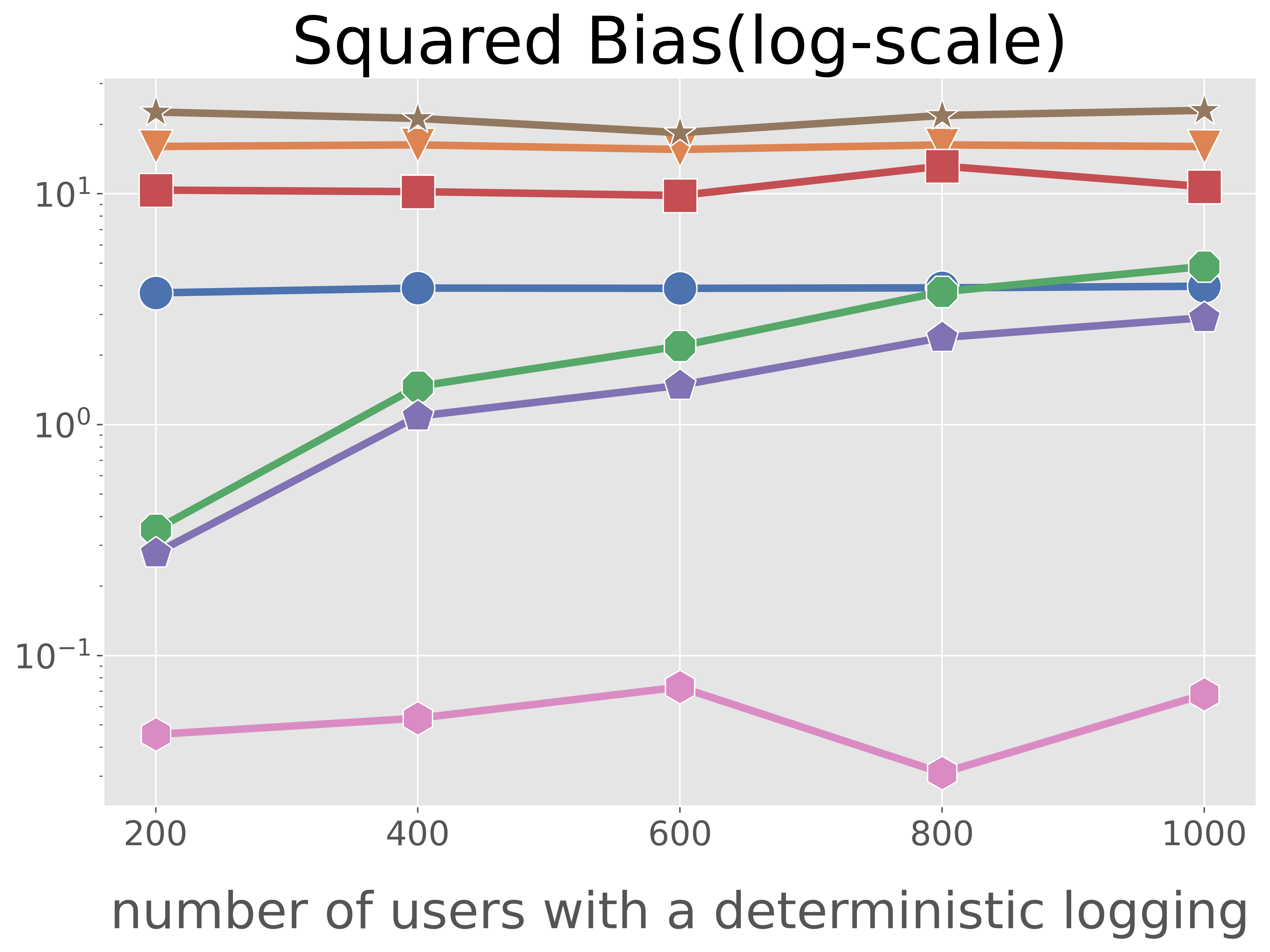}
\end{minipage}
\begin{minipage}[b]{0.325\columnwidth}
    \centering
    \includegraphics[width=0.9\columnwidth]{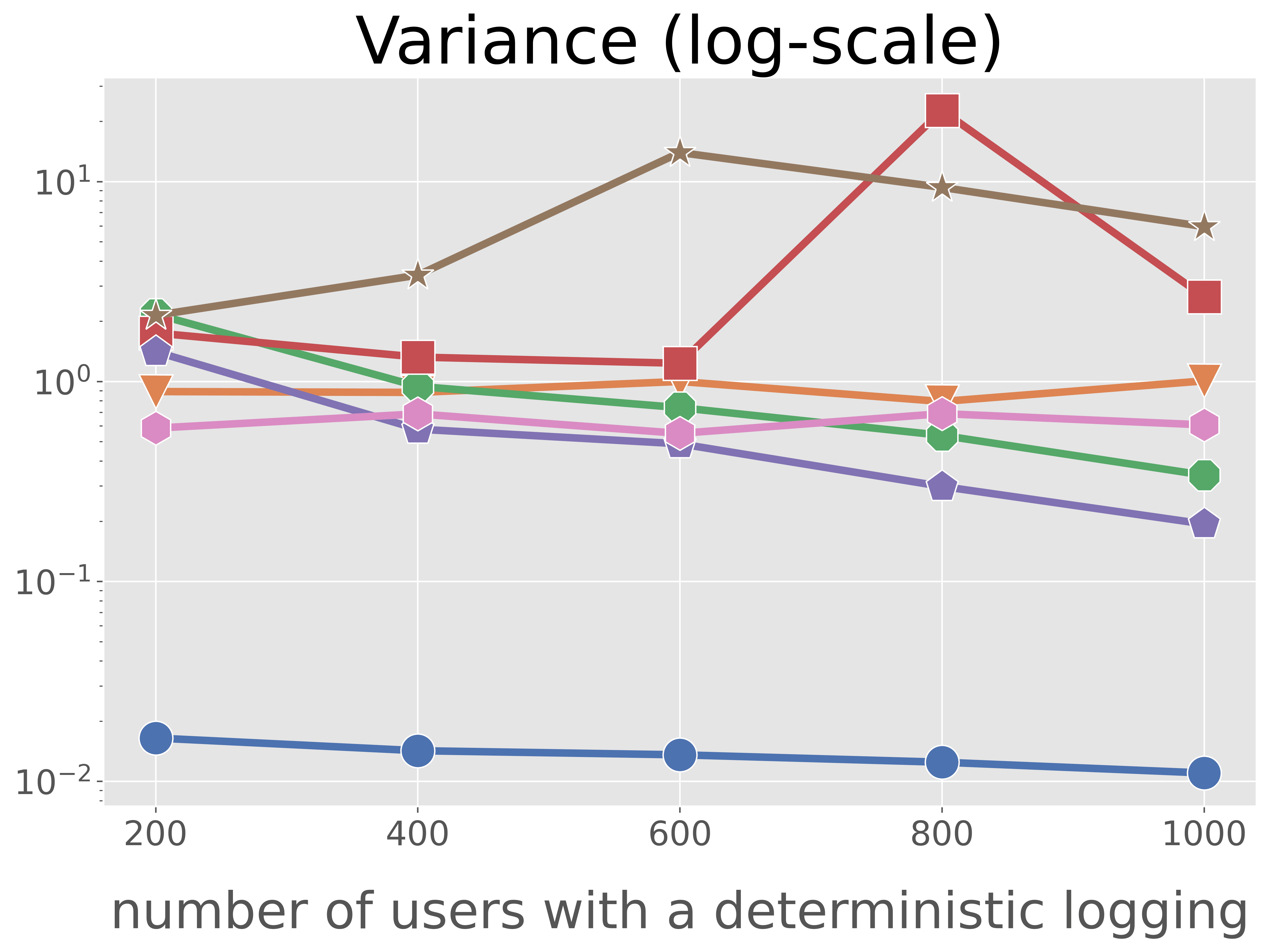}
\end{minipage} \vspace{-1mm}
\caption{MSE(\textbf{left}), Squared Bias(\textbf{center}), and Variance(\textbf{right}) with varying numbers of users whose logging policy is deterministic in the target domain.}  \vspace{2mm}
\label{fig3} 

\begin{minipage}[b]{0.325\columnwidth}
    \centering
    \includegraphics[width=1\columnwidth]{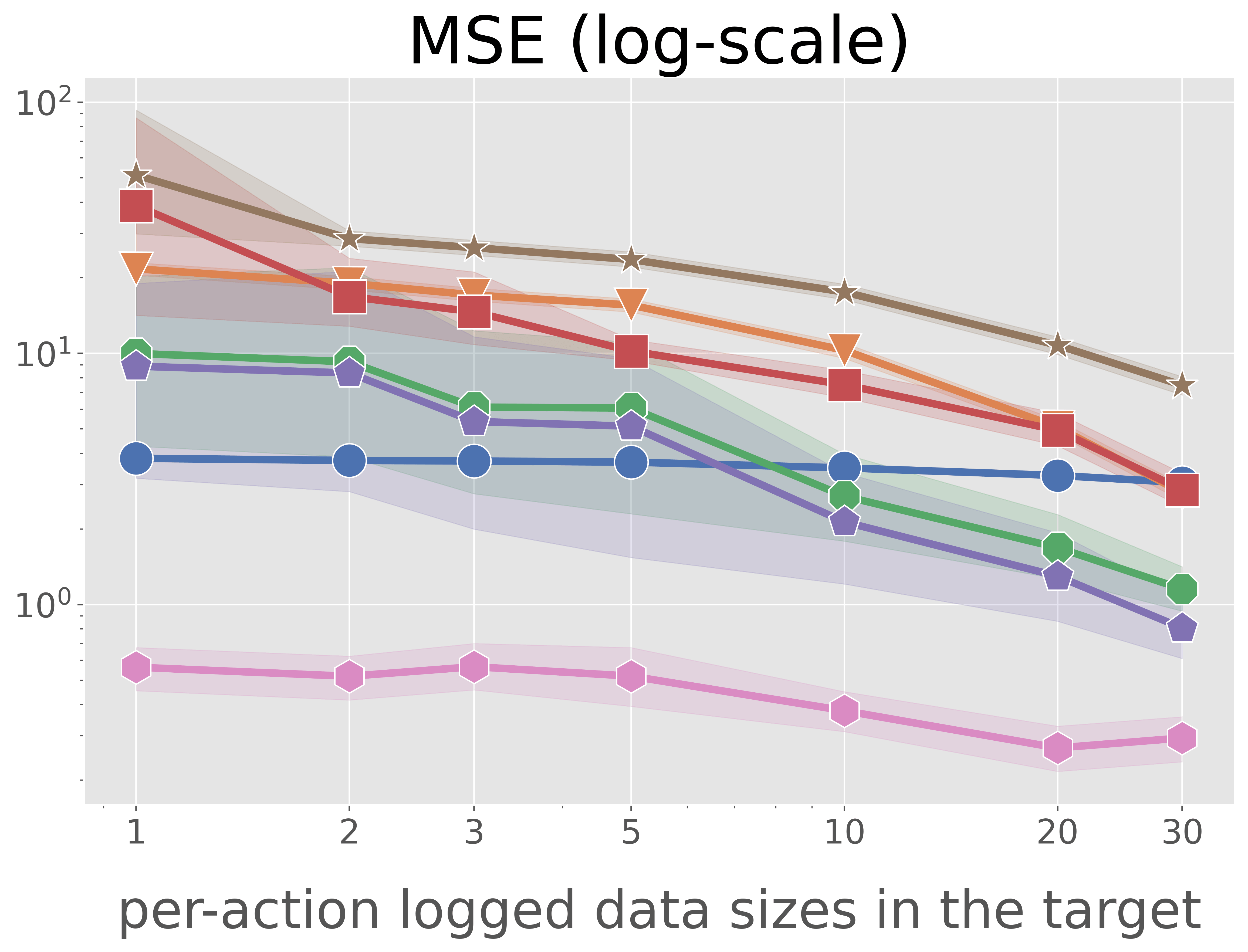}
\end{minipage}
\begin{minipage}[b]{0.325\columnwidth}
    \centering
    \includegraphics[width=1\columnwidth]{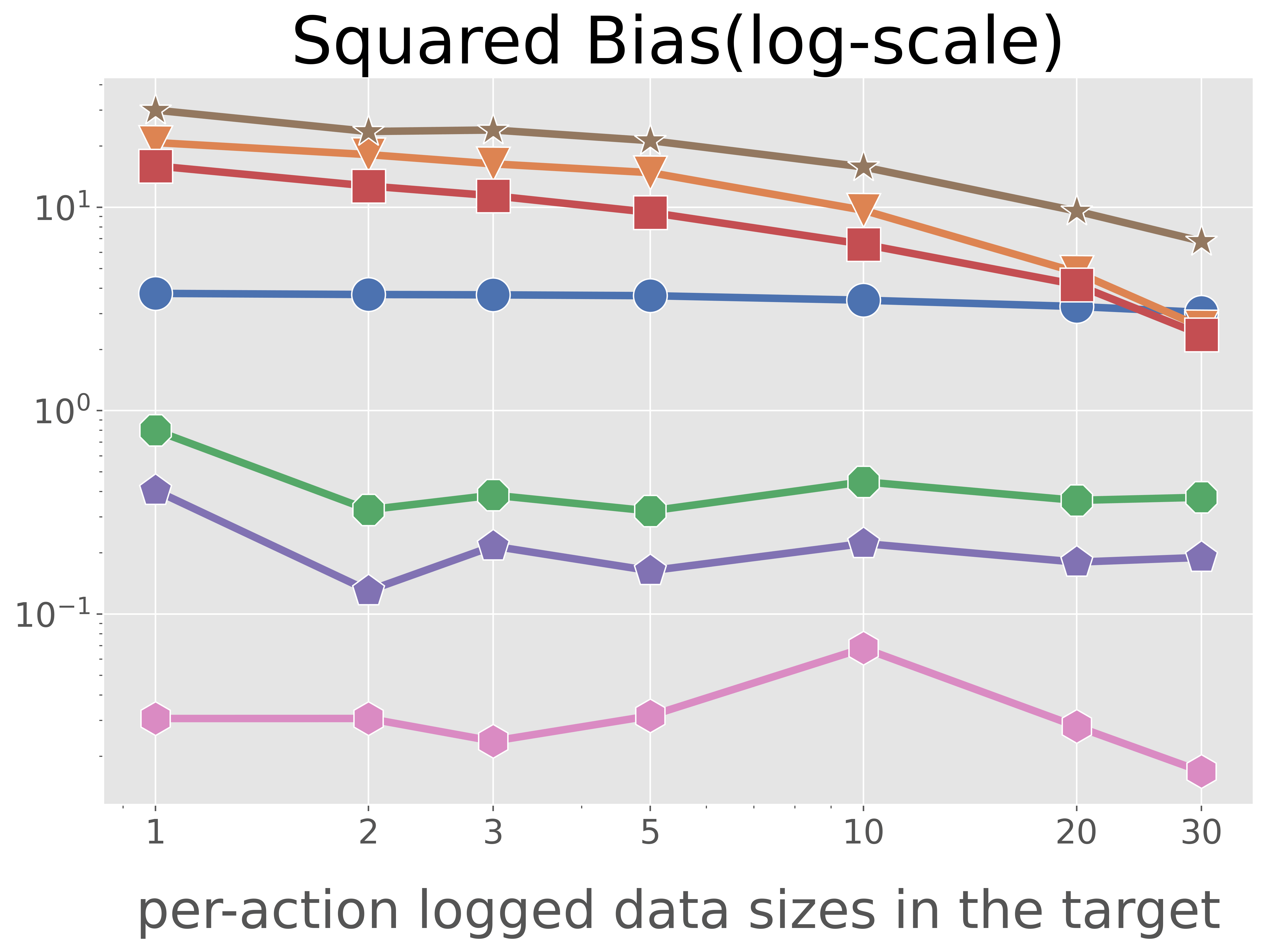}
\end{minipage}
\begin{minipage}[b]{0.325\columnwidth}
    \centering
    \includegraphics[width=1\columnwidth]{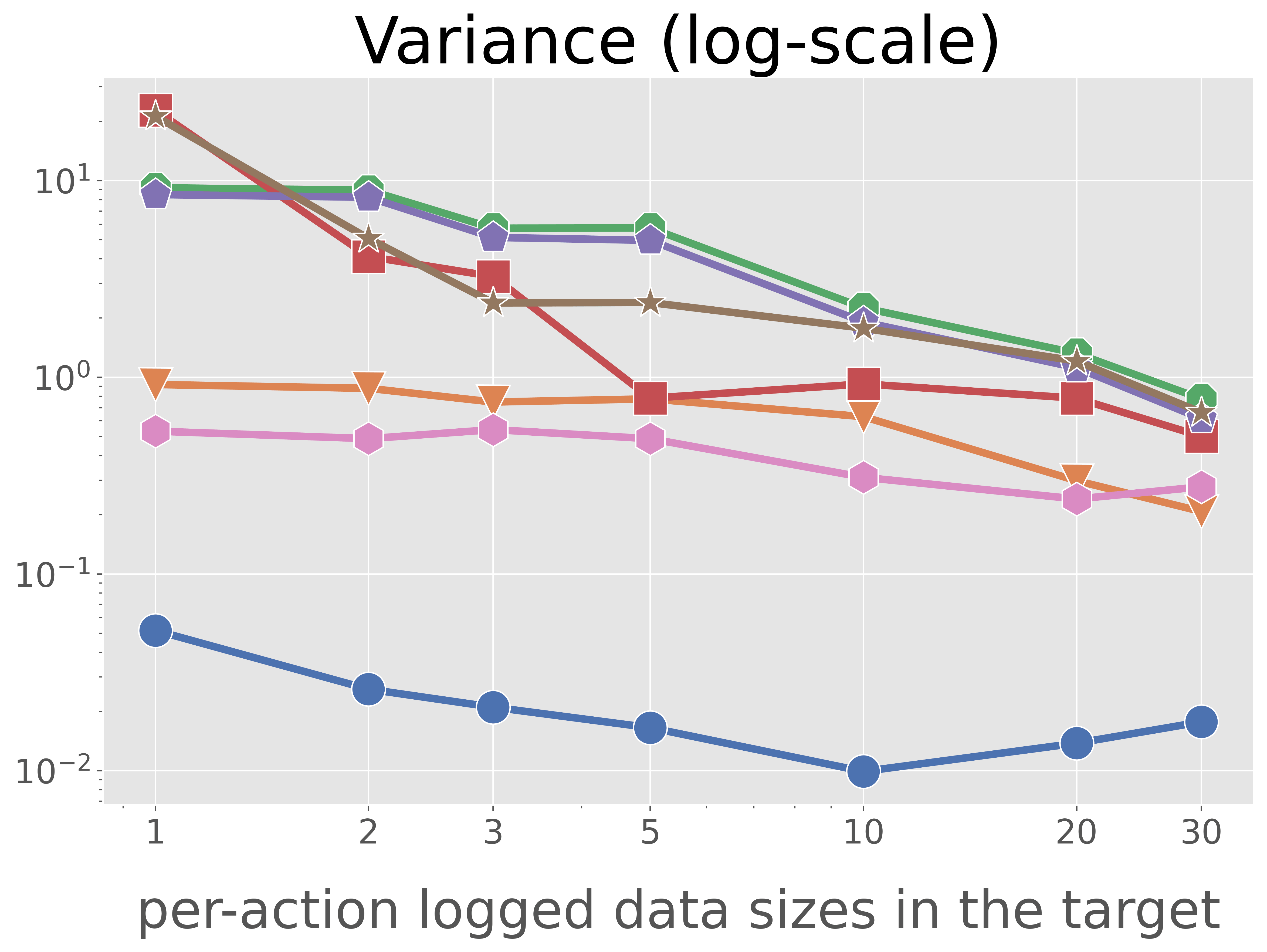}
\end{minipage} \vspace{-1mm}
\caption{MSE(\textbf{left}), Squared Bias(\textbf{center}), and Variance(\textbf{right}) with varying (per-action) logged data sizes in the target domain. }
\label{fig4}
\end{figure}

\textbf{Baselines.} We compare COPE with IPS(T), IPS-ALL, DR(T), and DR-ALL where IPS(T) and DR(T) perform off-policy estimation using only the target domain data $\calD^T$. We also include the Direct Method (DM) ($\hat{V}_{\rm{DM}}(\pi;\calD^T, \hat{q}^T)$) and DM-ALL ($\hat{V}_{\rm{DM-ALL}}(\pi;\calD, \hat{q})$) as baselines, which are rigorously defined in the appendix. We use Random Forest~\citep{breiman2001random} implemented in scikit-learn~\citep{pedregosa2011scikit} along with 3-fold cross-fitting~\citep{newey2018cross} to obtain $\hat{q}^T(x, a)$ for DR and DM, and $\hat{q}(x,a)$ for DR-ALL, DM-ALL, and COPE. In addition, for COPE, we use $|\phi(T)|=4$, where we define the target cluster $\phi(T)$ by the set of domains for which the difference in the empirical average of the rewards, $|\bar{r}^k - \bar{r}^T|$, is small.

\textbf{Results: \textit{Cross-Domain OPE}.}
The following reports and discusses the MSE, squared bias, and variance of the OPE estimators computed over 200 sets of logged data, each replicated with different seeds. The shaded regions in the MSE plots represent the 95\% confidence intervals estimated via bootstrap. Note that we set $K = 10$ for the number of domains and $n^{k}=100$ for the logged data size of each domain as the default experimental parameters.

First, we compare the estimators under varying ratios of new actions in the target domain, $|\mathcal{U}_0^T|/\mathcal{A}| \in \{0, 0.2, 0.4, 0.6, 0.8\}$, where $\mathcal{U}_0^T:=\{a\in\mathcal{A} \,|\, \pi_0^T(a|x)=0, \forall x\}$ in Figure~\ref{fig2}. The results indicate that estimators using only logged data from the target domain, particularly IPS(T) and DR(T), produce larger bias as the ratio increases, while estimators that naively use logged data from all domains, such as IPS-ALL, DR-ALL, and DM-ALL, exhibit consistently high bias due to their inability to deal with differences in DGPs. In contrast, COPE consistently performs the best without being affected by the presence of new actions and producing much bias. Note that the MSEs of IPS and DR do not deteriorate rapidly as the number of new actions increases. This is because, although the bias of IPS and DR increases with new actions, their variance decreases as the number of supported actions decreases (a similar phenomenon is observed in \cite{saito2022off}).

Next, we compare the estimators under varying numbers of users whose logging policy is deterministic. We applied a logging policy that deterministically selects a single action to a subset of the users, while we apply a stochastic logging policy in Eq.~\eqref{logging policy} to the rest. Figure~\ref{fig3} shows that COPE is robust against an increasing number of users having a deterministic logging policy and maintains a small bias even when a deterministic policy is applied to about 70\% ($\simeq1000/|\mathcal{U}|$) of all users. On the other hand, IPS(T) and DR(T) exhibit an increasing bias as the number of users with a deterministic logging policy grows, which is consistent with Eq.~\eqref{eq:bias-of-ips}. This is similar to the pattern observed with the increasing ratio of new actions and demonstrates the larger bias reduction by COPE in harder scenarios with more new actions and users with deterministic logging.

We also compare the estimators under varying per-action data sizes ($n^T / |\calA|$) in Figure~\ref{fig4}. We can see that COPE achieves the most accurate estimation, particularly with smaller logged data sizes in the target data. Specifically, when there is only a single data point per action, i.e., $n^T / |\calA| = 1$, COPE outperforms the best baseline (DM) by a substantial margin ($\frac{\mathrm{MSE}(\hat{V}_{\rm{DM(T)}})}{\mathrm{MSE}(\hat{V}_{\rm{COPE}})}=6.79$). Note also that the best baselines are different for different $n^T / |\calA|$, while COPE generally performs the best in any per-action data size. This powerful and stable behavior of COPE is due to its variance reduction by using data from multiple domains while accounting for varying DGPs to avoid producing bias.

\textbf{Results: \textit{Cross-Domain OPL}.}
We now compare the estimators in terms of their effectiveness in OPL when used to estimate the policy gradient $\nabla_{\theta}V^T(\pi_{\theta})$. In the following, we report the target policy values relative to those of the logging policy, computed over 350 sets of logged data, each replicated with different seeds. The default configurations are the same as those in the OPE experiments.

We first compare OPL methods under varying ratios of new actions in the target domain in Figure~\ref{fig5} (left). The results indicate that COPE-PG generally outperforms other methods without being affected by the presence of new actions. Additionally, Figure~\ref{fig5} (center) illustrates the policy value within new actions, $\mathbb{E}_{p^T(x)}[ \sum_{a \in \mathcal{U}_0^T}\pi_{\theta}(a|x)q(x, a)]$ (Figure~\ref{fig5} (right) reports the normalized version: $\mathbb{E}_{p^T(x)}[ \sum_{a \in \mathcal{U}_0^T}\pi_{\theta}(a|x)q(x, a)/\sum_{a \in \mathcal{U}_0^T}\pi_{\theta}(a|x)]$). We observe that methods using only logged data from the target domain, such as Reg-based(T), IPS-PG(T), and DR-PG(T), are not able to gain policy values from new actions, which is reasonable. In contrast, methods that additionally leverage logged data from the source domains, such as IPS-ALL, DR-ALL, and COPE-PG, develop policies that select new actions relatively well, with COPE-PG achieving particularly efficient selection of new actions, even though there is no exploration of such actions in the target domain.

We then compare the OPL methods under varying numbers of users with a deterministic logging policy in Figure~\ref{fig6} (left). We observed that the policy value of IPS-PG(T) and DR-PG(T) deteriorates as the degree of determinism increases. This is due to the fact that under deterministic logging policies and severe common support violations, they become substantially biased. On the other hand, COPE-PG consistently maintains the best policy value, unaffected by the increasing number of users with a deterministic logging policy due to its milder support condition (Condition~\ref{ass:common-cluster-support}).

We also compare the OPL methods under varying per-action training data sizes ($n^T/|\calA|$) in the target domain in Figure~\ref{fig6} (center). As we can observe from the figure, most existing methods show that the policy value becomes higher as the training data size increases, but COPE-PG generally performs the best. It is particularly impressive to see the stable performance of COPE-PG, while no baseline methods generally perform well for varying training data sizes. Specifically, IPS-PG(ALL) and DR-PG(ALL) are the second best, following COPE-PG when $n^T/|\calA| = 1$, while they perform even worse than IPS-PG(T) and DR-PG(T) when $n^T/|\calA| = 30$. In contrast, IPS-PG(T) and DR-PG(T) perform similarly to COPE-PG when $n^T/|\calA| = 30$ while they perform substantially worse when $n^T/|\calA| = 1$. Given the stable performance of COPE-PG, there is no particular reason to prioritize other methods over ours in any $n^T/|\calA|$ in terms of the OPL effectiveness.

We finally evaluate the effectiveness of COPE-PG with varying sizes of the target cluster $|\phi(T)|$ (a key hyperparameter of COPE-PG) in Figure~\ref{fig6} (right). Note that the baseline methods are independent of this parameter, so their results remain flat. The figure shows that the size of the target cluster indeed affects the performance of COPE-PG, with the best performance observed at moderate cluster sizes, such as 6 and 8. However, it should also be noted that COPE-PG still outperforms the baseline methods even with overly small ($|\phi(T)| = 2$) or large ($|\phi(T)| = 10$) cluster sizes, demonstrating the robustness of our method to potential failures in tuning of the parameter.

\section{Conclusion and Future Work}
This paper studied OPE/L in challenging scenarios such as with deterministic logging policies, new actions, and few-shot logged data, which existing formulations and methods cannot address. To solve those challenging issues, we formulated the problem of cross-domain OPE/L and proposed a novel estimator and policy-gradient method based on a reward function decomposition to leverage data explored in source domains without introducing much bias. In particular, our methods are able to evaluate and optimize new policies even with deterministic logging policies or new actions in the target domain with analyzable bias.

As future work, even though the heuristic domain clustering of using the empirical averaged rewards as domain embeddings worked satisfactorily in our experiments, it would be valuable to develop a more principled method to perform clustering of domains where we can possibly apply techniques from relevant work in OPE~\citep{peng2023offline,sachdeva2024off,kiyohara2024off}.

\begin{figure}[t]
\centering
\begin{center}
\centering\includegraphics[width=1.02\columnwidth]{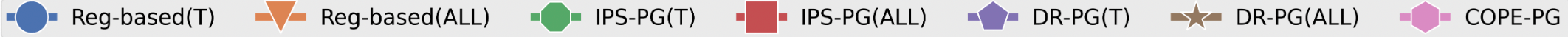} \vspace{-5mm}
\end{center}
\begin{minipage}[b]{0.325\columnwidth}
    \centering
    \includegraphics[width=1\columnwidth]{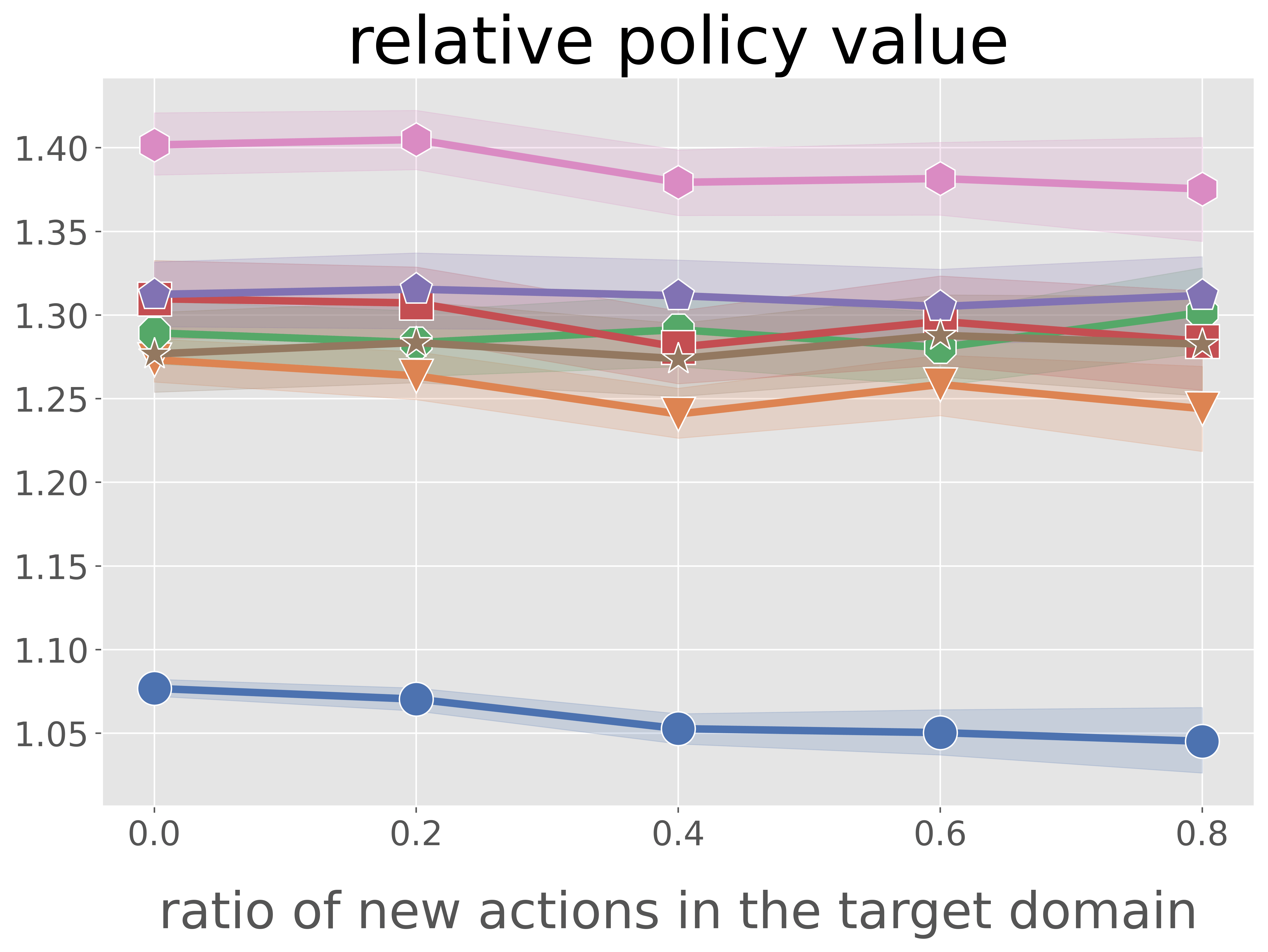}
\end{minipage}
\begin{minipage}[b]{0.325\columnwidth}
    \centering
    \includegraphics[width=1\columnwidth]{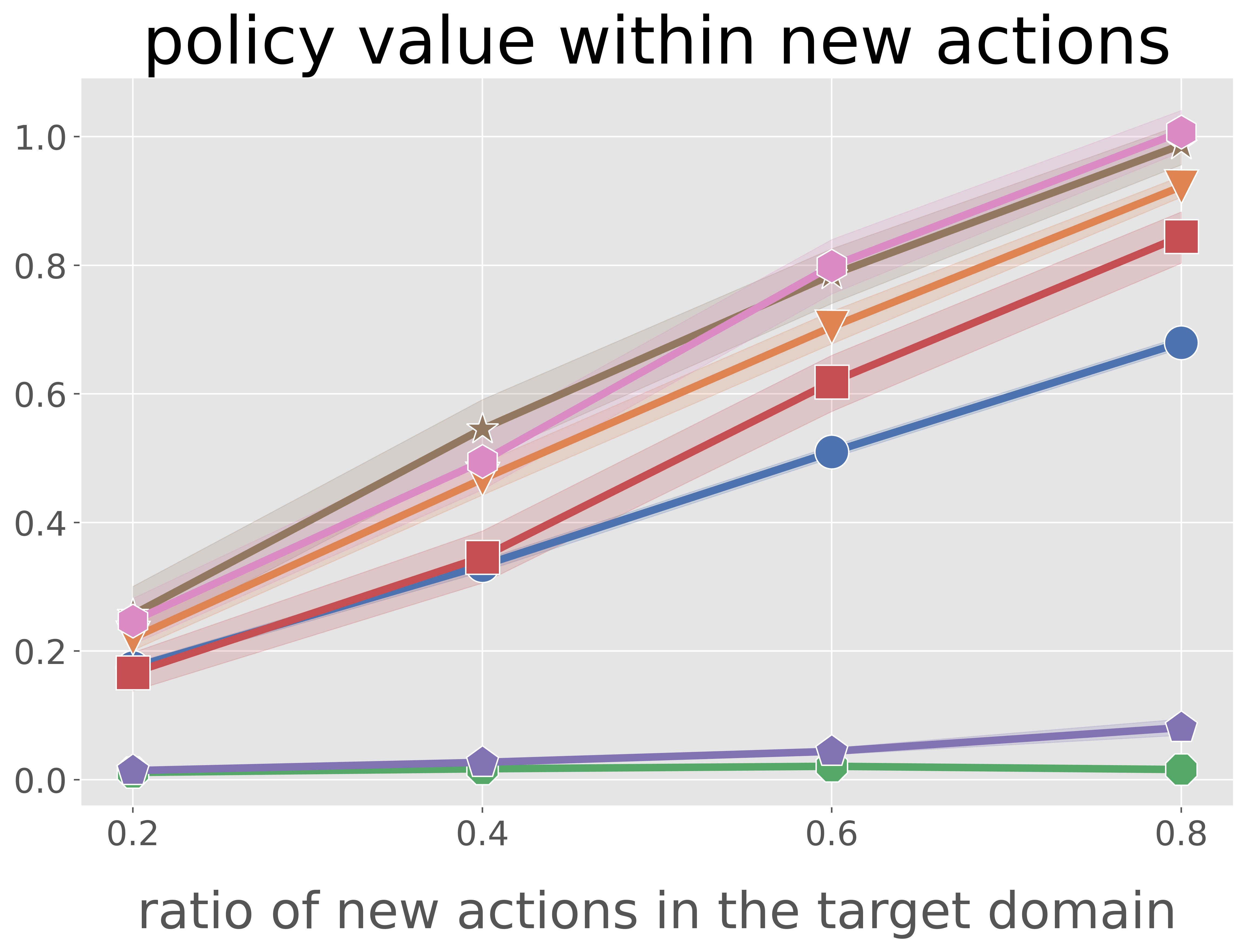}
\end{minipage}
\begin{minipage}[b]{0.325\columnwidth}
    \centering
    \includegraphics[width=1\columnwidth]{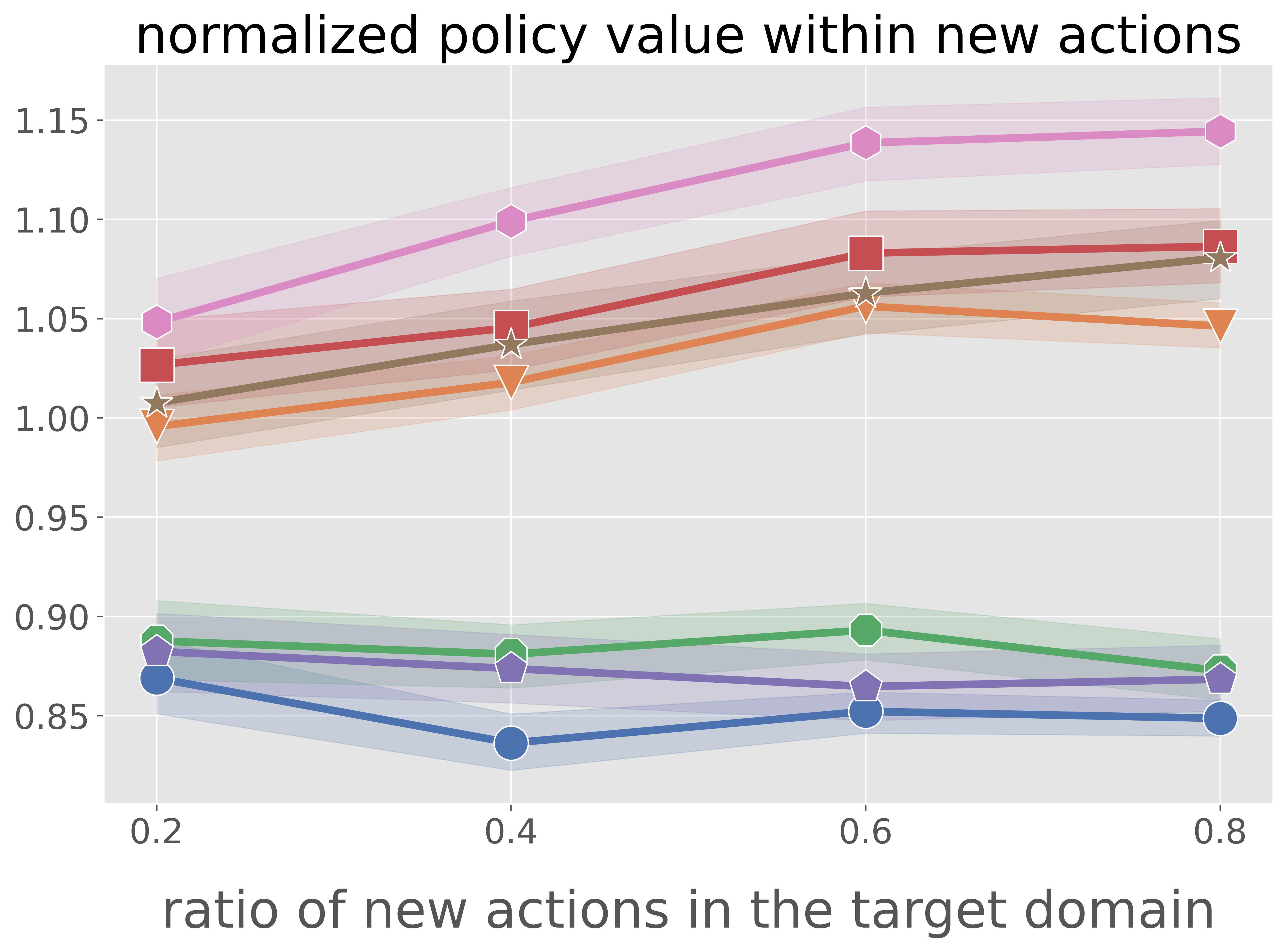}
\end{minipage} \vspace{-2mm}
\caption{Comparison of the (\textbf{left}) the test policy values $V^T(\pi)$ (normalized by $V(\pi_0)$), (\textbf{center}) the test policy values within new actions, (\textbf{rigt}) the normalized test policy values within new actions, under varying ratios of new actions in the target domain.} \vspace{2.5mm}
\label{fig5}
\centering
\begin{minipage}[b]{0.325\columnwidth}
    \centering
    \includegraphics[width=1\columnwidth]{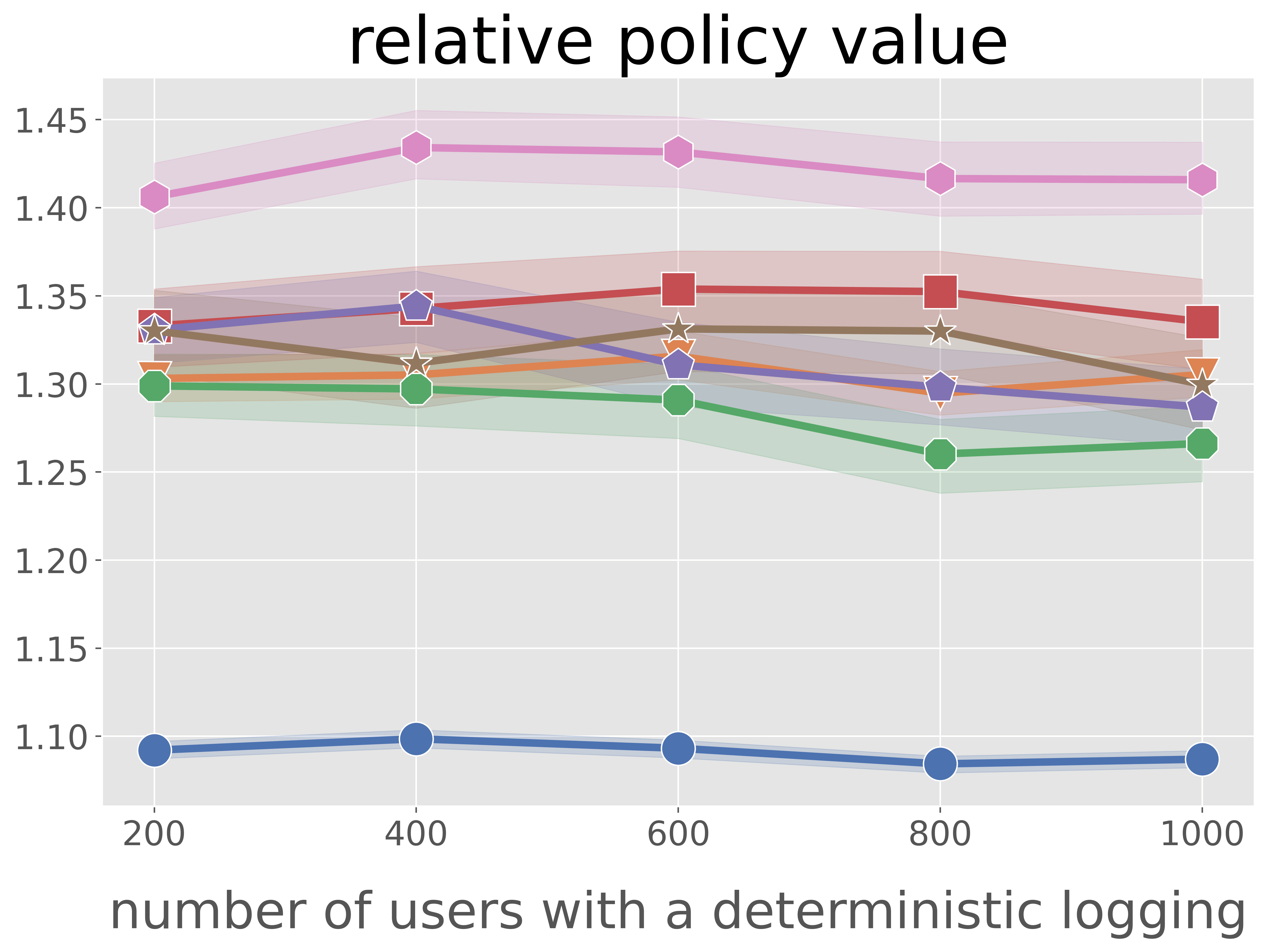}
\end{minipage}
\begin{minipage}[b]{0.325\columnwidth}
    \centering
    \includegraphics[width=1\columnwidth]{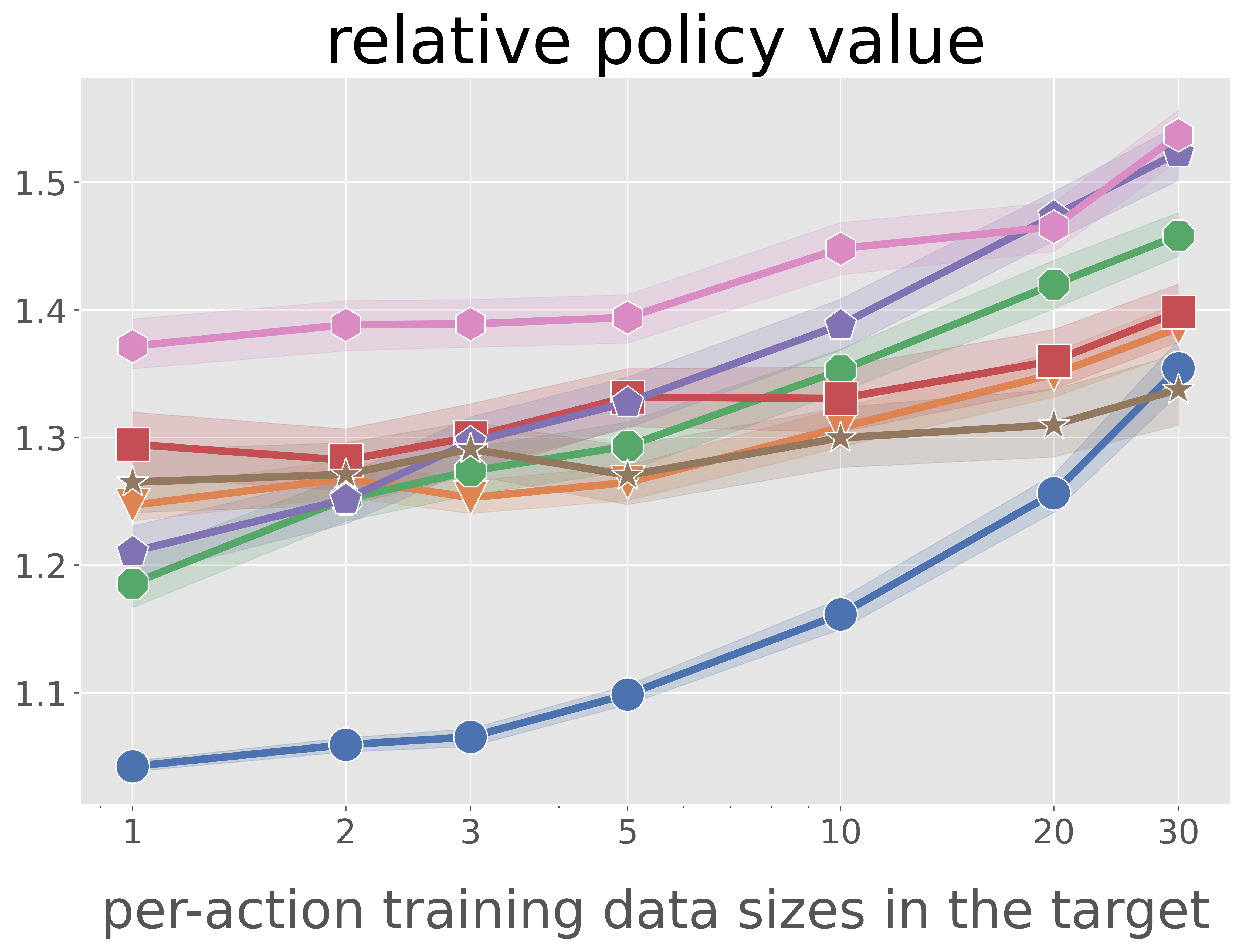}
\end{minipage}
\begin{minipage}[b]{0.325\columnwidth}
    \centering
    \includegraphics[width=1\columnwidth]{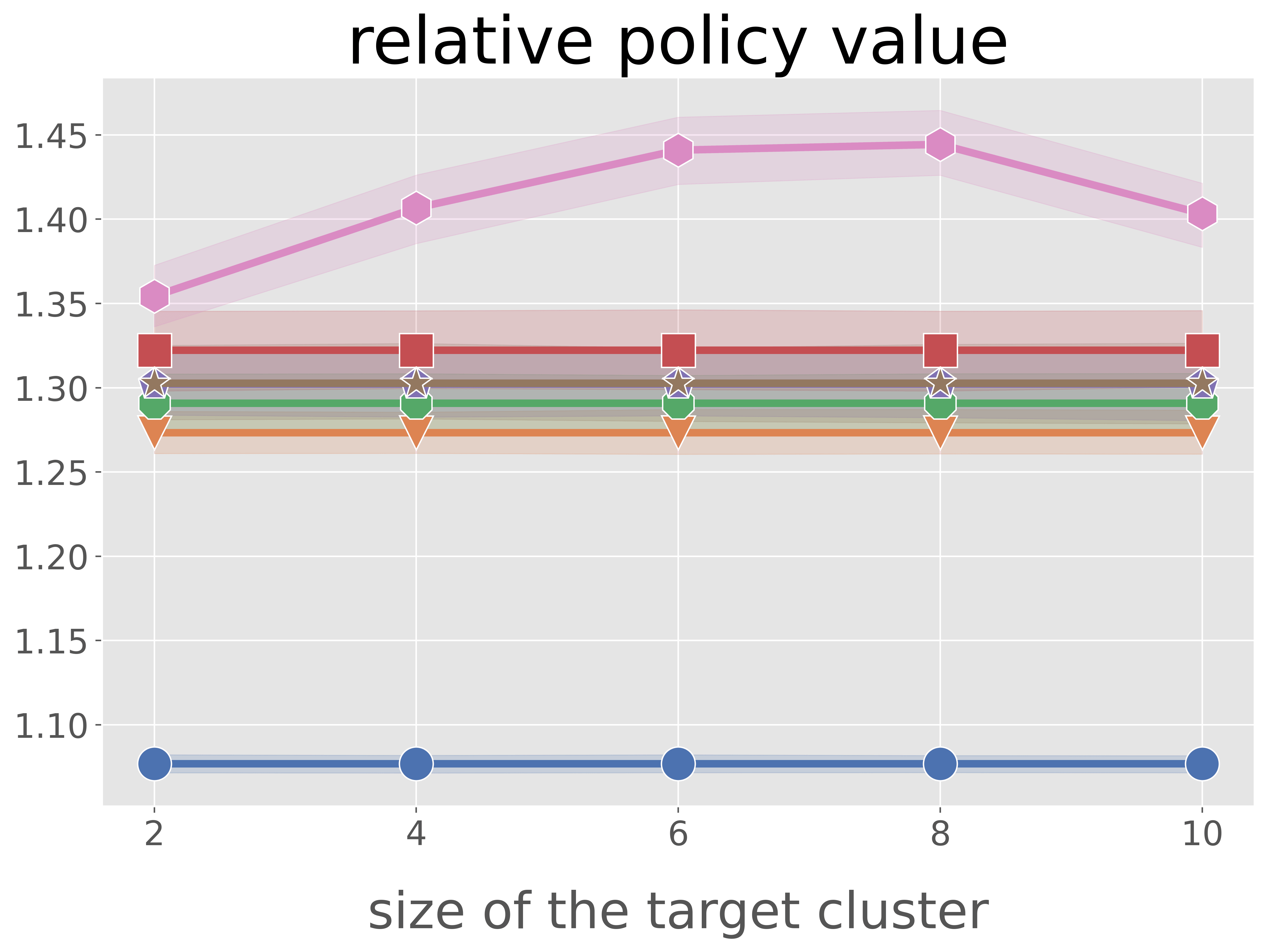}
\end{minipage} \vspace{-2mm}
\caption{Comparison of the test policy values $V^T(\pi)$ (normalized by $V(\pi_0)$) of the OPL methods under (\textbf{left}) varying numbers of users whose logging is deterministic, (\textbf{center}) varying (per-action) training data sizes in the target domain, (\textbf{right}) varying sizes of the target cluster.}
\label{fig6}
\end{figure}\vspace{-8mm}



\vspace{7mm}
\bibliography{iclr2025_conference}
\bibliographystyle{iclr2025_conference}

\clearpage
\input{appendix_edit}

\end{document}

%% file: math_commands.tex
\usepackage{amsmath,amsfonts,bm}

\def\eqref#1{(\ref{#1})}

\def\1{\bm{1}}

\DeclareMathAlphabet{\mathsfit}{\encodingdefault}{\sfdefault}{m}{sl}
\SetMathAlphabet{\mathsfit}{bold}{\encodingdefault}{\sfdefault}{bx}{n}

\DeclareMathOperator*{\argmin}{arg\,min}

\definecolor{dkred}{rgb}{0.8,0,0}
\definecolor{dkgreen}{rgb}{0,0.4,0}
\definecolor{gray}{rgb}{0.2,0.2,0.2}

\newcommand{\mE}{\mathbb{E}}

\newcommand{\calD}{\mathcal{D}}
\newcommand{\calX}{\mathcal{X}}
\newcommand{\calA}{\mathcal{A}}

%% file: appendix_edit.tex
\appendix
\section{Related Work} \label{app:related}
Off-policy evaluation (OPE) and learning (OPL) have gained particular attention in contextual bandit settings as they offer a safe and cost-efficient alternative to online A/B testing ~\citep{mehrotra2018towards,gilotte2018offline,saito2021evaluating,kiyohara2024towards} and are of high practical relevance. While several practical estimators and policy gradient methods (most of which are based on importance weighting) already exist ~\citep{saito2021counterfactual,uehara2022review,dudik2014doubly,wang2017optimal,su2020doubly} for each of OPE or OPL, the effectiveness of these methods can decline in challenging but realistic scenarios. First, high bias may occur when the logging policy fails to satisfy the common support condition~\citep{sachdeva2020off}, which becomes even more severe when the logging policy is deterministic or when there are new actions. Several recent works have addressed violations of common support (also known as support deficiency) by exploiting additional information about actions, restricting the action space, applying reward extrapolation, or limiting the policy space, but they cannot handle deterministic logging policies or entirely new actions \citep{felicioni2022off, sachdeva2020off}. It may also be helpful to use some structure in the action or reward space to relax the common support requirement, as explored by \cite{saito2022off,saito2023off,cief2024learning,taufiq2024marginal,sachdeva2024off,kiyohara2024off}. For example, \cite{saito2022off} leverage additional information about the actions in the form of action embeddings, and \cite{kiyohara2023off} define importance weights in a low-dimensional slate abstraction space. However, this useful structure is not always learnable.
In addition, existing estimators and methods based on importance weighting face significant variance issues, particularly when the logged dataset is small and the action space is large ~\citep{saito2022off,cief2024learning,sachdeva2024off}. Weight clipping ~\citep{swaminathan2015counterfactual,su2020doubly} or normalization ~\citep{swaminathan2015self} might be applied to mitigate variance, but they often introduce a substantial amount of bias in estimation.

Another line of work is OPE/L under covariate shift, as researched by \cite{uehara2020off}. This research focuses on cases where the context distributions differ between the logged data ($p^{\text{hist}}(x)$) and the evaluation environment ($p^{\text{eval}}(x)$), while the reward distribution $p(r|x, a)$ remains unchanged. Under this setup, \cite{uehara2020off} propose doubly robust and efficient estimators by leveraging an estimator of the density ratio between the historical and evaluation data distributions. In contrast, we aim to estimate the same estimand using data from both target and source domains (which may include multiple source domains), with each domain having unique context distributions $p^k(x)$ and reward distributions $p^k(r|x, a)$. By leveraging this new setup, our primary goal is to address challenging scenarios in the target domain, such as new actions, deterministic logging policies, and extremely small logged data, which clearly differ from the motivations of \cite{uehara2020off}.

Finally, we mention the work by \cite{cief2024cross} and \cite{liu2024beyond}, which formulate the problem of cross-domain offline reinforcement learning (offline RL). The goals of their formulations are somewhat related to ours, as they aim to learn a policy that performs well in the target domain by leveraging data from both target and source domains. However, they focus on the offline RL setup and consider only shifts in transition dynamics. In contrast, we focus on the contextual bandit setup and address varying context and reward distributions across domains. We also analyze and experiment with the problem of OPE, while \cite{cief2024cross} and \cite{liu2024beyond} focus solely on policy learning. Furthermore, it is important to note that we explore the problem of cross-domain OPE/OPL to tackle the unsolved issues of new actions, deterministic logging, and limited logged data, whereas previous work more generally considers the cross-domain setup. Note that we did not compare pessimistic OPL methods~\citep{swaminathan2015counterfactual,london2019bayesian,gabbianelli2024importance,sakhi2024logarithmic} because they are not relevant to our context. Our main motivation is to solve the prevalent problem of (completely) deterministic logging and new actions, issues that pessimistic techniques do not aim to address. However, our proposed method could easily be combined with a pessimistic approach if one wants to do so.

\section{Omitted Proofs} \label{app:proof}

\subsection{Proof of Theorem~\ref{thm:bias-of-COPE}}
\label{app:b1}
\begin{proof}
The bias of COPE under Condition~\ref{ass:common-cluster-support} is derived as
\begin{align*}
    &\mathrm{Bias}(\hat V_{\mathrm{COPE}}(\pi;D^{\phi(T)}))
    \\&=\mathbb{E}[\hat V_{\mathrm{COPE}}(\pi;D^{\phi(T)})]-V^T(\pi)
    \\&=\frac{1}{n^{\phi(T)}}\sum_{k\in\phi(T)}\sum_{i=1}^{n^{k}}
    \mathbb{E}_{p^{k}(x)\pi_0^{k}(a|x)p^{k}(r|x,a)}[\frac{\pi(a_{i}^{k}|x_{i}^{k})}{{p^{\phi(T)}(a_{i}^{k}|x_{i}^{k})}}(r_{i}^{k}-\hat{q}(x_{i}^{k},a_{i}^{k}))]
    \\&\phantom{=}+\frac{1}{n^{T}}\sum_{i=1}^{n^{T}}\mathbb{E}_{p^{T}(x)\pi_0^{T}(a|x)p^{T}(r|x,a)}
    [\mathbb{E}_{\pi(a|x_{i})}\hat{q}(x_{i}^{T},a^T)]
    \\&\phantom{=}-\mathbb {E}_{p^T(x)\pi(a|x)p^T(r|x,a)}[r]
    \\&=\sum_{k\in\phi(T)}\frac{n^{k}}{n^{\phi(T)}}\mathbb{E}_{p^{k}(x)\pi_0^{k}(a|x)p^{k}(r|x,a)}[\frac{\pi(a^{k}|x^{k})}{{p^{\phi(T)}(a^{k}|x^{k})}}(r^{k}-\hat{q}(x^{k},a^{k}))]
    \\&\phantom{=}+\mathbb{E}_{p^{T}(x)\pi_0^{T}(a|x)p^{T}(r|x,a)}
    [\sum_{a \in \mathcal A}\pi(a|x^T)\hat{q}(x^T,a)]
    \\&\phantom{=}-\mathbb {E}_{p^T(x)\pi(a|x)}[q^T(x,a)]
    \\&=\sum_{k\in\phi(T)}\frac{n^{k}}{n^{\phi(T)}}\mathbb{E}_{p^{k}(x)\pi_0^{k}(a|x)}[\frac{\pi(a^{k}|x^{k})}{{p^{\phi(T)}(a^{k}|x^{k})}}\Delta_{q,\hat{q}}^k(x,a)]
    \\&\phantom{=}+\sum_{x\in\mathcal{X}}
    \sum_{a \in \mathcal A}p^{T}(x)\pi(a|x)\hat{q}(x,a)
    \\&\phantom{=}-\sum_{x\in\mathcal{X}}
    \sum_{a \in \mathcal A}p^{T}(x)\pi(a|x)q^T(x,a)
    \\&=
    \sum_{x\in\mathcal{X}}\sum_{k\in\phi(T)}\sum_{a\in\mathcal{A}}\frac{p^T(x)\pi(a|x)}{\sum_{k \in \phi(T)}\frac{n^{k}}{n^{\phi(T)}} p^{k}(x)\pi_0^{k}(a|x)}\frac{n^k}{n^{\phi(T)}}
    p^{k}(x)\pi_0^{k}(a|x)\Delta_{q,\hat{q}}^k(x,a)
    \\&\phantom{=}-\sum_{x\in\mathcal{X}}
    \sum_{a \in \mathcal A}p^{T}(x)\pi(a|x)\Delta_{q,\hat{q}}^T(x,a)
    \\&=
    \sum_{x\in\mathcal{X}}p^T(x)\{(\sum_{k\in\phi(T)}\sum_{a\in\mathcal{A}}\frac{\pi(a|x)}{\sum_{k \in \phi(T)}\frac{n^{k}}{n^{\phi(T)}} p^{k}(x)\pi_0^{k}(a|x)}\frac{n^k}{n^{\phi(T)}}
    p^{k}(x)\pi_0^{k}(a|x)\Delta_{q,\hat{q}}^k(x,a))
    \\&\phantom{=}-\sum_{a \in \mathcal A}\pi(a|x)\Delta_{q,\hat{q}}^T(x,a)\}
    \\&=
    \sum_{x\in\mathcal{X}}p^T(x)\sum_{a\in\mathcal{A}}\pi(a|x)\{(\sum_{k\in\phi(T)}\frac{1}{\sum_{k \in \phi(T)}\frac{n^{k}}{n^{\phi(T)}} p^{k}(x)\pi_0^{k}(a|x)}\frac{n^k}{n^{\phi(T)}}
    p^{k}(x)\pi_0^{k}(a|x)\Delta_{q,\hat{q}}^k(x,a))
    \\&\phantom{=}-\Delta_{q,\hat{q}}^T(x,a)\}
    \\&=
    \mathbb{E}_{p^T(x)\pi(a|x)} \left[ \left\{ \left( \sum_{k\in\phi(T)} \frac{n^k}{n^{\phi(T)}}\frac{p^{k}(x)}{p^{T}(x)}\frac{\pi_0^{k}(a|x)}{p^{\phi(T)}(a|x)} \Delta_{q,\hat{q}}^k(x,a) \right)-\Delta_{q,\hat{q}}^T(x,a) \right\} \right].
\end{align*}
\end{proof}

\subsection{Proof of Corollary \ref{prop3.3}}
\begin{proof}
Starting from the final expression of \ref{app:b1}, 
the bias of COPE under the additional Condition~\ref{ass:cpc} is calculated as
\label{proof3.3}
\begin{align*}
&\mathbb{E}_{p^T(x)\pi(a|x)} \left[ \left\{ \left( \sum_{k\in\phi(T)} \frac{n^k}{n^{\phi(T)}}\frac{p^{k}(x)}{p^{T}(x)}\frac{\pi_0^{k}(a|x)}{p^{\phi(T)}(a|x)} \Delta_{q,\hat{q}}^k(x,a) \right)-\Delta_{q,\hat{q}}^T(x,a) \right\} \right]\\
&=
\mathbb{E}_{p^T(x)\pi(a|x)} \left[ \left\{ \left( \sum_{k\in\phi(T)} \frac{n^k}{n^{\phi(T)}}\frac{p^{k}(x)}{p^{T}(x)}\frac{\pi_0^{k}(a|x)}{p^{\phi(T)}(a|x)} \Delta_{q,\hat{q}}^T(x,a) \right)-\Delta_{q,\hat{q}}^T(x,a) \right\} \right] \because \text{Condition~\ref{ass:cpc}}\\
&=
\mathbb{E}_{p^T(x)\pi(a|x)} \left[ \Delta_{q,\hat{q}}^T(x,a)\left\{ \left( \sum_{k\in\phi(T)} \frac{n^k}{n^{\phi(T)}}\frac{p^{k}(x)}{p^{T}(x)}\frac{\pi_0^{k}(a|x)}{p^{\phi(T)}(a|x)} \right)-1 \right\} \right]\\
&= 0,
\end{align*}
which concludes the proof.
\end{proof}

\subsection{Bias of the Baseline Methods}

Here, we can derive the bias of IPS-ALL by calculating its expectation as

\begin{align*}
\mathbb{E}_{\calD} [\hat{V}_{\mathrm{IPS-ALL}}(\pi;\calD)] 
& = \frac{1}{N}\sum_{k=1}^K\sum_{i=1}^{n^{k}} \mathbb{E}_{p^k(x)\pi_0^k(a|x)} [w^{k}(x^{k},a^{k}) q^k(x,a)] \\
& = \frac{1}{N}\sum_{k=1}^K\sum_{i=1}^{n^{k}} \mathbb{E}_{p^k(x)\pi^k(a|x)} [q^k(x,a)] \; \because w^{k}(x^{k},a^{k}) = \frac{\pi^k(a\,|\,x)}{\pi^k_0(a\,|\,x)} \\
& = \sum_{k=1}^K \frac{n^k}{N} \mathbb{E}_{p_k(x)\pi(a|x)}[q^k(x,a)].
\end{align*}

Therefore, the bias of IPS-ALL is:
$$ 
\operatorname{Bias}(\hat{V}_{\mathrm{IPS}}(\pi ; \mathcal{D})) = 
\left| \mathbb{E}_{p_T(x)\pi(a|x)}[q^T(x,a)] - \left( \sum_k \frac{n^k}{N} \mathbb{E}_{p_k(x)\pi(a|x)}[q^k(x,a)] \right)  \right|.
$$

We observe that they are no longer guaranteed to remain unbiased in our setup. This bias becomes significant when the reward distributions across domains differ substantially because the expectation of IPS-ALL is characterized by the weighted sum of the policy values of all the domains involved. In contrast, the bias becomes zero when the reward distributions across all domains are identical (which is no longer a cross-domain setup).
Based on this derivation and Theorem 3.1, we can compare the bias of COPE and that of IPS/DR-ALL as follows:
\begin{itemize}
    \item COPE performs better when the distributions across domains are substantially different and also when the pairwise estimation by $\hat{q}(x,a)$ is accurate.
    \item IPS/DR-ALL performs better when the distributions across domains are similar, i.e., when the cross-domain setup is close to a typical single-domain setup.
\end{itemize}

When the setup is close to the typical single-domain scenario, effective estimators such as DR and its extensions are already well-known. Therefore, we focused more on novel and challenging scenarios where the distributions across domains are not necessarily similar, as baseline methods tend to fail in these cases as shown above and in our experiments.

In practice, we do not precisely know which of COPE and baselines are better. This is not the case only for our setup, but also for the other OPE problems. A typical example is that we never know which of IPS and DM is better, because the comparison depends on many (potentially unknown) parameters such as the reward noise and logged data size. This is why there exists an orthogonal line of work around estimator selection in OPE~\citep{su2020adaptive,udagawa2023policy,cief2024cross}, and we can use those methods to identify the appropriate estimator tailored to the specific problem when we use OPE in general.

\section{Density Ratio Estimation} \label{app:dre}
Our implementation in the experiments relies on one of the most standard methods, unconstrained Least-Squares Importance Fitting (uLSIF), to perform density ratio estimation proposed in \citep{kanamori2012statistical,sugiyama2012density}. The method considers minimizing the following squared error between $s$ and $r$ where $s$ is an estimator for the ratio and $w(x) = p^k(x)/p^{T}(x)$ is the ratio we need to estimate in order to use $p^{\phi(T)}(x, a)$ in Eq.~\eqref{eq:average-iw}:
$$
\mathbb{E} [(s(x)-w(x))^{2}]=\mathbb{E}_{p^k(x)} [(w(x))^{2}]-2 \mathbb{E}_{p^{T}(x)} [s(x)]+\mathbb{E}_{p^k(x)} [(s(x))^{2}]
$$

The first term of the equation does not affect the result of the optimization and we can ignore it, i.e., the density ratio is estimated through the following minimization problem:
$$
s^{*}=\underset{s \in \mathcal{S}}{\argmin} \, \left[\frac{1}{2} \mathbb{E}_{p^{T}(x)} [(s(x))^{2}]-\mathbb{E}_{p^k(x)} [s(x)] \right]
$$

To solve the empirical version of this optimization, \cite{kanamori2012statistical} use kernel-based hypotheses to estimate the density ratio nonparametrically with a regularization term as usual. \href{https://github.com/hoxo-m/densratio_py}{https://github.com/hoxo-m/densratio\_py}, which we relied on in our experiments, is one of the well-known public implementations of the method.

\section{Additional Experiment Setup and Results}

\subsection{Additional Results on Real-World Data} \label{app:real}
This section compares the OPE estimators under varying sizes of the target cluster $|\phi(T)| \in \{2, 4, 6, 8, 10\}$ (a key hyperparameter of COPE) in Figure~\ref{fig7}. Note that the baseline estimators are independent of this parameter, so their results remain constant. We observe that COPE, when using a smaller target cluster size, has a small bias but a large variance. This is because the smaller target cluster size reduces the bias caused by differences in the distribution across domains. In contrast, the smaller size also decreases the amount of available logged data, leading to an increase in variance. Additionally, the figure suggests that COPE with a larger target cluster size exhibits higher variance. This occurs because as the number of domains included in the COPE estimation increases, larger variances are more likely to arise from the density ratio terms. Nevertheless, the results show that COPE outperforms the baseline estimators, even with small ($|\phi(T)|=2$) or large ($|\phi(T)|=10$) cluster sizes, as seen in the OPL experiment.

\begin{figure}[t]
\centering
\begin{center}
    \centering\includegraphics[width=0.85\columnwidth]{figures/legend_ope.png}
\end{center}
\begin{minipage}[b]{0.325\columnwidth}
    \centering
    \includegraphics[width=1.0\columnwidth]{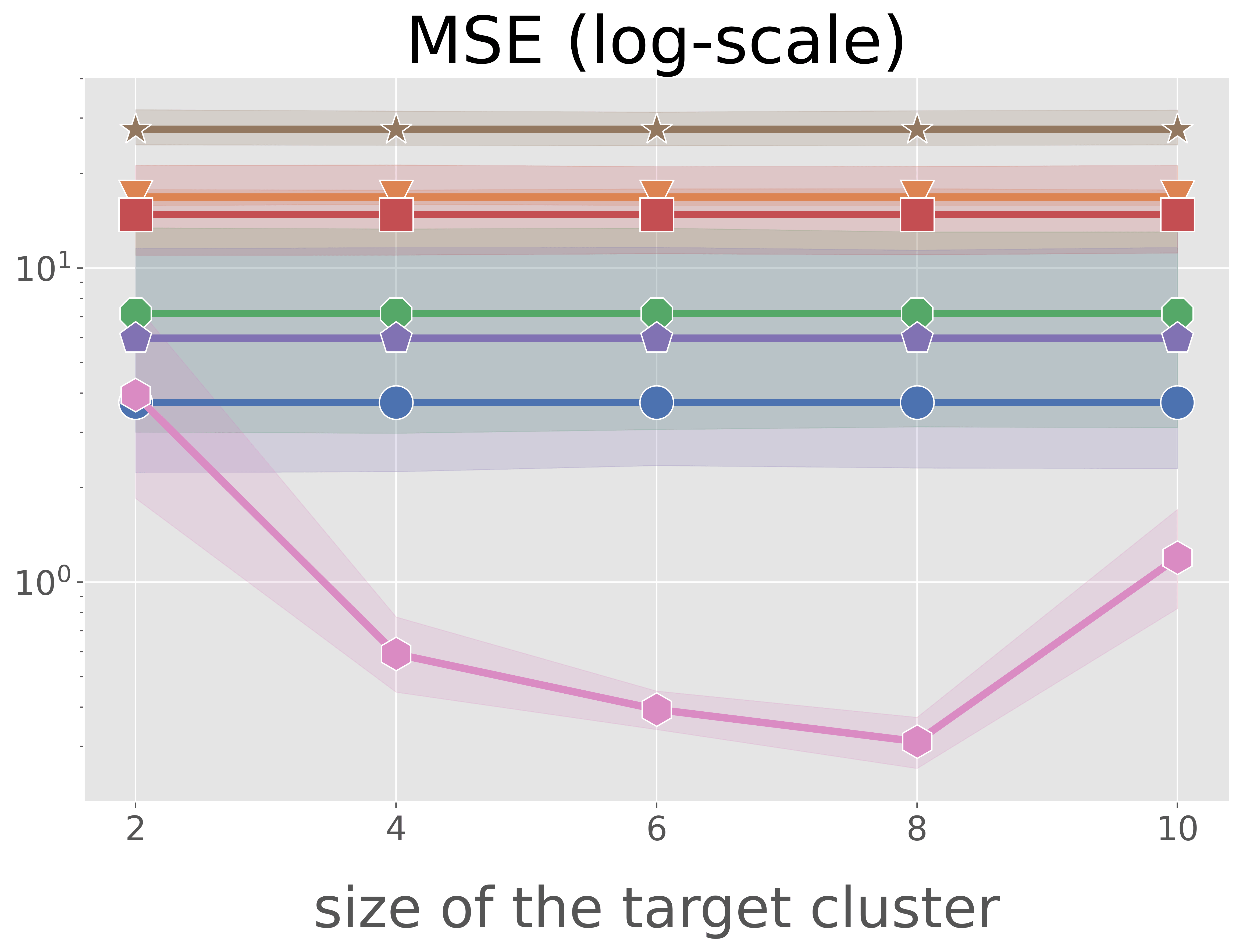}
\end{minipage}
\begin{minipage}[b]{0.325\columnwidth}
    \centering
    \includegraphics[width=1.0\columnwidth]{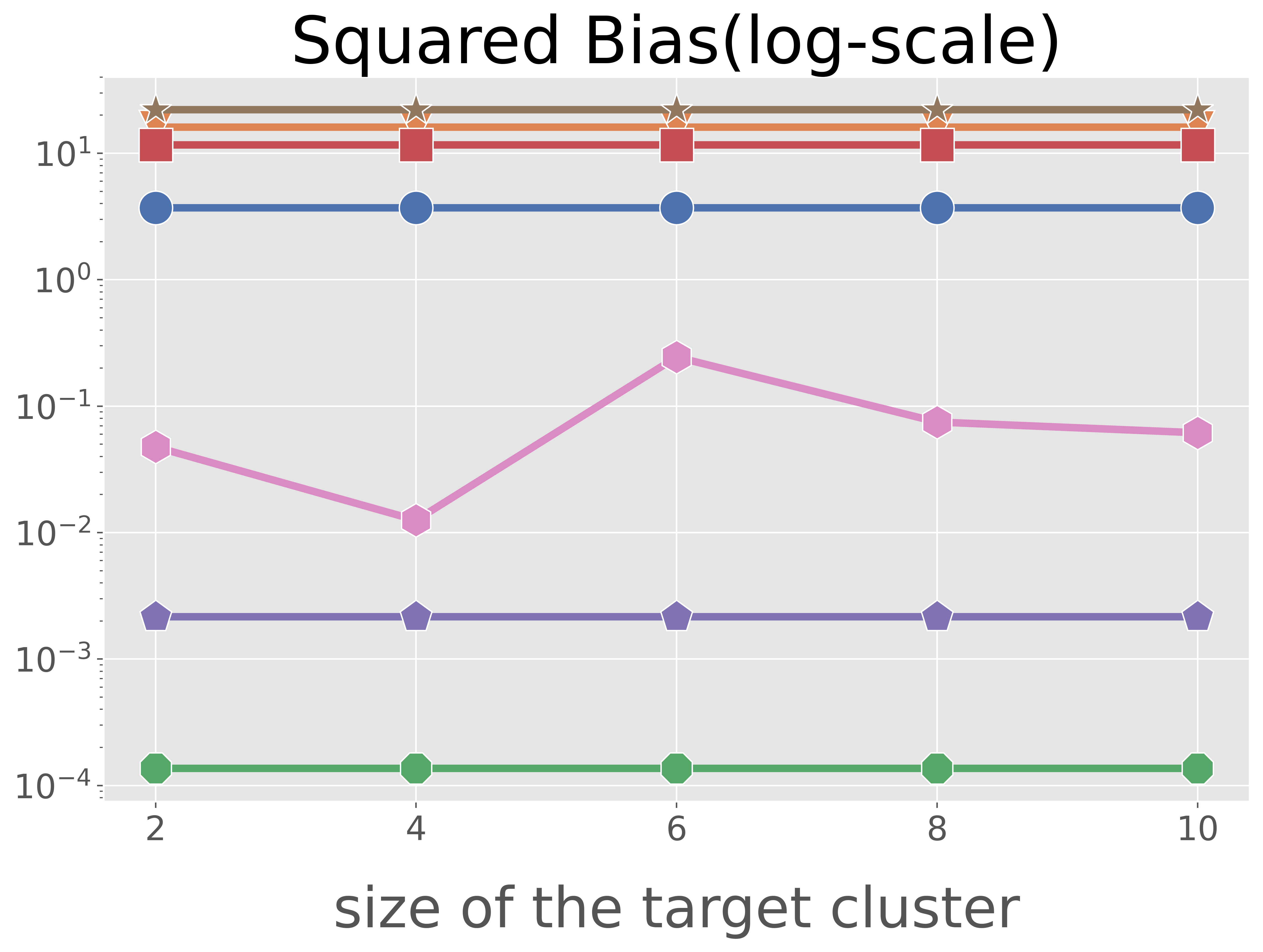}
\end{minipage}
\begin{minipage}[b]{0.325\columnwidth}
    \centering
    \includegraphics[width=1.0\columnwidth]{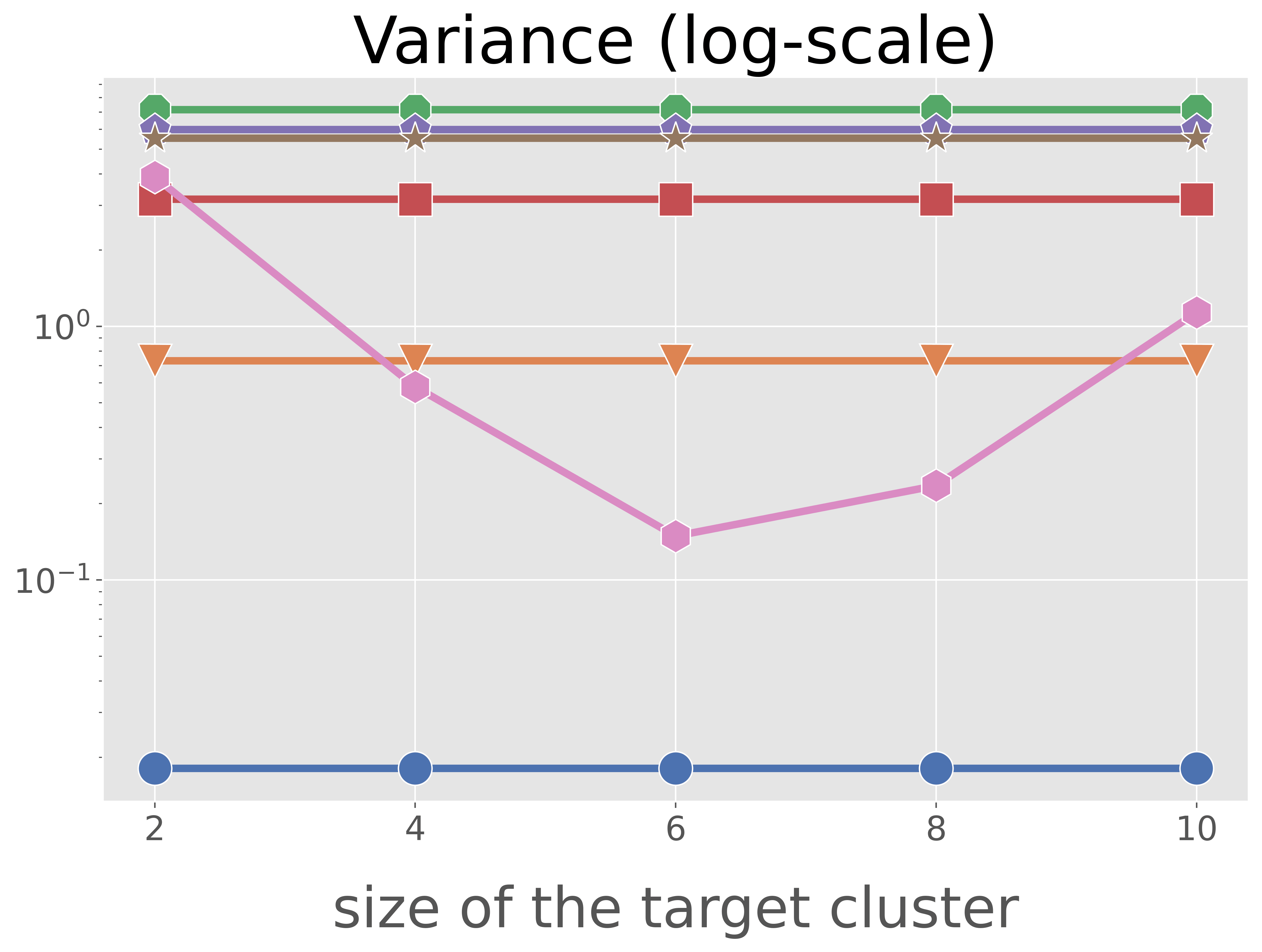}
\end{minipage}

\caption{MSE(\textbf{left}), Squared Bias(\textbf{center}), and Variance(\textbf{right}) with with varying size of the target cluster.}
\label{fig7}
\end{figure}

Moreover, we exclude $\bar{q}(u)$ from the definition of the user distribution $p^k(u)$, following an observation that this might favor our methods. Instead, we define a linear function of the user feature $f^k(u) = \beta^k x_u$, where the coefficient vector $\beta^k$ is sampled from a normal distribution separately for each domain $k$. We then define the user distribution $p^k(u)$ for each domain as $ p^{k}(u):=\frac{\exp (f^k(u))}{\sum_{u^{\prime} \in \mathcal{U}} \exp (f^k(u^{\prime}))} $. This procedure follows the multi-domain recommendation literature, which uses user features to define the domains in their experiments~\citep{yang2024mlora}. More importantly, this approach avoids any dependence of the distribution on $\bar{q}(u)$. We conducted both OPE and OPL experiments based on this new user distribution, and the results are summarized in Tables~\ref{table:ope-new-actions} to \ref{table:opl-deterministic-logging}. The results indicate that our method mostly outperforms the baseline methods that use only the target domain data or all the data across target and source domains in both OPE and OPL experiments (bold fonts indicate the best method, if the best and second best are not significantly different, we use bold fonts for both), even with $p^k(u)$ independent of $\bar{q}(u)$.

Furthermore, we can extend the definition of the user distribution as:  
$$ p^{k}(u):=\frac{\exp (\lambda \bar{q}(u) + (1-\lambda) f^k(u))}{\sum_{u^{\prime} \in \mathcal{U}} \exp (\lambda \bar{q}(u’) + (1-\lambda) f^k(u')))} $$
where $\lambda$ controls the degree to which the user distribution depends on $\bar{q}(u)$. When $\lambda = 1$, the setup reduces to our original experiment. Table~\ref{table:opl-lambda} shows the result of the OPL experiment for varying $\lambda$ values. We should mention here that the difference between the proposed method and the baselines decreases when $\lambda$ is small, which is reasonable. These additional observations, however, demonstrate the robustness of COPE to varying user distributions, even with a simple clustering procedure. Nevertheless, we believe a more principled and effective approach to domain clustering for our method likely exists. As mentioned in Section 5, we consider the development of a clustering algorithm tailored to Cross-Domain OPE and OPL to be a valuable direction for future research.

\input{additional_results}

\subsection{Synthetic Experiment} \label{app:synthetic}
This section empirically demonstrates the advantages of COPE over existing approaches using synthetic data. The benefits of using synthetic data include the ability to accurately compare estimators based on ground-truth policy values and the flexibility to control experiment configurations in order to test a wide range of scenarios. To generate the synthetic data, we first randomly cluster a total of $K=30$ domains, corresponding to the definition of $\phi(k)$. We then sample a 5-dimensional domain embedding $e^{k}$ for each domain $k$ from the standard normal distribution. We also sample 10-dimensional context vectors $x^{k}$ from a normal distribution with mean $\mu_{k}$ and a standard deviation of $\sigma_x = 1$. The mean parameter $\mu_{k}$ of the context distribution is sampled from a uniform distribution within the range $[-1.0,1.0]$.

We synthesize the expected reward function by using domain embedding $e$ as an input as below. 
\begin{equation}
    \label{reward function in syns}
    q(x,a,e; \lambda) = \lambda g(x,a,c) + h(x,a,e),
\end{equation}
where $\lambda$ is an experiment parameter that controls the influence of the domain-cluster effect ($g$) compared to the domain-specific effect ($h$), where we use $\lambda = 0.5$ as the default parameter. Specifically, we used the following functions as $g(\cdot, \cdot, \cdot)$ (domain-cluster effect) and $h(\cdot,\cdot,\cdot)$ (domain-specific effect), respectively.
\begin{align*}
g(x,a,c)&:=x^{\top}M^{c}_{x,a}\mathrm{one\_hot}_{a}+\theta_{x,c}^{\top}x+\theta_{a,c}^{\top}\mathrm{one\_hot}_{a}+\theta_{c}^{\top}\mathrm{one\_hot}_{c},\\
h(x,a,e)&:=\theta_e^{\top}e+x^{\top}M_{x,e}e+\mathrm{one\_hot}_{a}^{\top}M_{a,e}e,
\end{align*}
where $\mathrm{one\_hot}_{a}$ is the one-hot encoding of the action and $\mathrm{one\_hot}_{c}$ is the one-hot encoding of the domain-cluster $c$. $M^{c}_{x, a}, \theta_{x,c}, \theta_{a,c}, \theta_{c}$ are parameter matrices or vectors in domain-cluster effect function sampled from a uniform distribution with range $[-1,1]$ separately for each given domain cluster $c$. Also, $ M_{x,e}, M_{a,e}$ are parameter matrices in domain-specific effect function sampled from a uniform distribution with range $[-1,1]$, and $\theta_e$ is parameter vector sampled from a uniform distribution with range $[-10,10]$ separately for each given domain cluster $c$.

Based on the expected reward function, we sample the reward $r^{k}$ from a normal distribution with mean $q(x, a,e; \lambda)$ and standard deviation $\sigma_r = 1$. 

We define the logging policy $\pi_0^{k}$ for each domain $k$ by applying softmax to the expected reward function $q(x,a,e; \lambda)$ as
\begin{equation}
    \label{logging policy in syns}
    \pi_0^{k}(a|x):=\frac{\mathrm{exp}(\beta^k\cdot (q(x,a,e; \lambda)+\eta_{x,a})}{\sum_{a'\in \calA}\mathrm{exp}(\beta^k\cdot (q(x,a',e; \lambda)+\eta_{x,a})}
\end{equation}
where $\beta^{k}$ and $\eta_{x,a}$ are sampled from a uniform distribution within range $[-0.5,0.5]$ . 
In contrast, the evaluation policy $\pi$ is defined as
\begin{align}
    \pi(a|x)= (1-\epsilon) \cdot \mathbb{I} \{ a= \mathrm{argmax}_{a'\in A} \quad q(x,a',e; \lambda)\}+\frac{\epsilon}{|\calA|}, 
\end{align}
where $\epsilon \in [0, 1]$ controls the quality of $\pi$ and we set $\epsilon = 0.2$ as default.

\subsubsection{Baseline Estimators}
Below, we define the baseline estimators compared in our (both synthetic and real-world) experiments.

\textbf{Direct Method (DM).} The DM estimator, which uses only logged data from the target domain, is defined as follows:
    $$\hat{V}_{\rm{DM}}(\pi;\calD^T, \hat{q}^T):= \frac{1}{n^T}\sum_{i=1}^{n^T}\mathbb{E}_{\pi(a|x_i)}[\hat{q}^T(x_i,a)],$$
where $\hat{q}^T(x, a)$ estimates $q^T(x, a)$ based on logged bandit data from the target domain. Additionally, the DM-ALL estimator, which uses logged data from both the target and source domains, is defined as follows:
    $$\hat{V}_{\rm{DM-ALL}}(\pi;\calD, \hat{q}):= \frac{1}{N}\sum_{k=1}^K\sum_{i=1}^{n^k}\mathbb{E}_{\pi(a|x_i^k)}[\hat{q}(x_i^k,a)],$$
where $\hat{q}(x, a)$ estimates $q(x, a)$ based on all domains logged bandit data.

\textbf{Inverse Propensity Score (IPS).} The IPS estimator, which uses only logged data from the target domain, is defined as follows:
    $$\hat{V}_{\rm{IPS}}(\pi;\calD^T):=\frac{1}{n}\sum_{i=1}^{n}\frac{\pi(a_{i}|x_{i})}{\pi_0^T(a_{i}|x_{i})}r_{i}=\frac{1}{n}\sum_{i=1}^{n}w^T(x_{i},a_{i})r_{i},$$
where $w^T(x,a):=\pi(a|x) / \pi_0^T(a|x)$ is importance weight of the target domain. 
The IPS-ALL estimator, which uses logged data from both the target and source domains, is defined as follows:
    $$\hat{V}_{\rm{IPS-ALL}}(\pi;\calD):=\frac{1}{N}\sum_{k=1}^K\sum_{i=1}^{n^{k}}\frac{\pi(a_{i}^k|x_{i}^k)}{\pi_0^k(a_{i}^k|x_{i}^k)}r_{i}^{k}:=\frac{1}{N}\sum_{k=1}^K\sum_{i=1}^{n^{k}}w^{k}(x_{i}^{k},a_{i}^{k})r_{i}^{k},$$
where $w^k(x,a):=\pi(a|x) / \pi_0^k(a|x)$ is importance weight of each domain and $N:=\sum_{k=1}^Kn^{k}$. 

\textbf{Doubly Robust (DR).} The DR estimator, which uses only logged data from the target domain, is defined as follows:
    $$\hat{V}_{\rm{DR}}(\pi;\calD^T, \hat{q}^T):=\frac{1}{n}\sum_{i=1}^{n}\{w(x_{i},a_{i})(r_{i}-\hat{q}^T(x_{i},a_{i}))+ \mE_{\pi(a|x_i)}\lbrack \hat{q}^T(x_i,a)\rbrack \}.$$
The DR-ALL estimator, which uses logged data from both the target and source domains, is defined as follows:
    $$\hat{V}_{\rm{DR-ALL}}(\pi;\calD, \hat{q})
    :=\frac{1}{N} \sum_{k=1}^K\sum_{i=1}^{n^{k}}\{w^{k}(x_{i}^{k},a_{i}^{k})(r_{i}^{k}-\hat{q}(x_{i}^{k},a_{i}^{k}))+ \mE_{\pi(a|x_i^k)}\lbrack \hat{q}(x_i^k,a)\rbrack \}.$$

\subsubsection{Experiment Results on Synthetic Data} 
In the following, we report and discuss the MSE, squared bias, and variance of the estimators computed over 150 sets of logged data, each replicated with different seeds. Note that the default parameters in our synthetic validation are $|\calA|= 20$, $K=30$, $n^{k}=100$, and $|c^T|= 9$.

First, we compare the estimators under varying ratios of new actions in the target domain, $|\mathcal{U}_0^T|/|\mathcal{A}| \in \{0, 0.2, 0.4, 0.6, 0.8\}$, where $\mathcal{U}_0^T:= {a \in \mathcal{A} \,|\, \pi_0^T(a \,|\, x) = 0, \forall x}$ (see Figure~\ref{fig8}). The results show that COPE consistently outperforms the other estimators without being affected by the presence of new actions. As in the real-world data experiment, the estimators using only logged data from the target domain, particularly IPS(T) and DR(T), produce a large bias as the ratio increases. Estimators that naively use logged data from all domains, such as IPS(ALL), DR(ALL), and DM(ALL), exhibit consistently high bias due to their inability to handle differences in DGPs.

Second, we compare the estimators under varying percentages of samples with deterministic logging, as shown in Figure~\ref{fig9}. In the experiment, we calculate the percentages (20\%, 40\%, 60\%, 80\%) of a particular one-dimensional context in the target domain. Then, if the value of the context in the target logged data is smaller than the percentile point, we apply a deterministic logging policy; otherwise, we apply a probabilistic logging policy defined by Eq.~\eqref{logging policy in syns}. The results indicate that COPE is robust against an increasing degree of deterministic logging policy and maintains a small bias. In contrast, IPS(T) and DR(T) show increasing bias as the percentile of the target domain context grows, similar to the real-world OPE experiment with varying numbers of users whose logging is deterministic in the target domain.

Finally, we compare the estimators under varying logged data sizes of the target domain ($n^T$) in Figure~\ref{fig10}. We observe that most existing estimators show a decrease in MSE as the logged data size in the target domain increases, but COPE generally achieves the most accurate estimation. Notably, when $n^T = 50$, COPE outperforms the best baseline DM(T) and the second-best baseline DR(T) by a substantial margin $(\frac{\mathrm{MSE}(\hat{V}{\rm{DM(T)}})}{\mathrm{MSE}(\hat{V}{\rm{COPE}})} = 2.46, \frac{\mathrm{MSE}(\hat{V}{\rm{DR(T)}})}{\mathrm{MSE}(\hat{V}{\rm{COPE}})} = 3.29)$.

\begin{figure}[t]
\centering
\begin{center}
    \centering\includegraphics[width=0.9\columnwidth]{figures/legend_ope.png}
\end{center}
\begin{minipage}[b]{0.325\columnwidth}
    \centering
    \includegraphics[width=1.0\columnwidth]{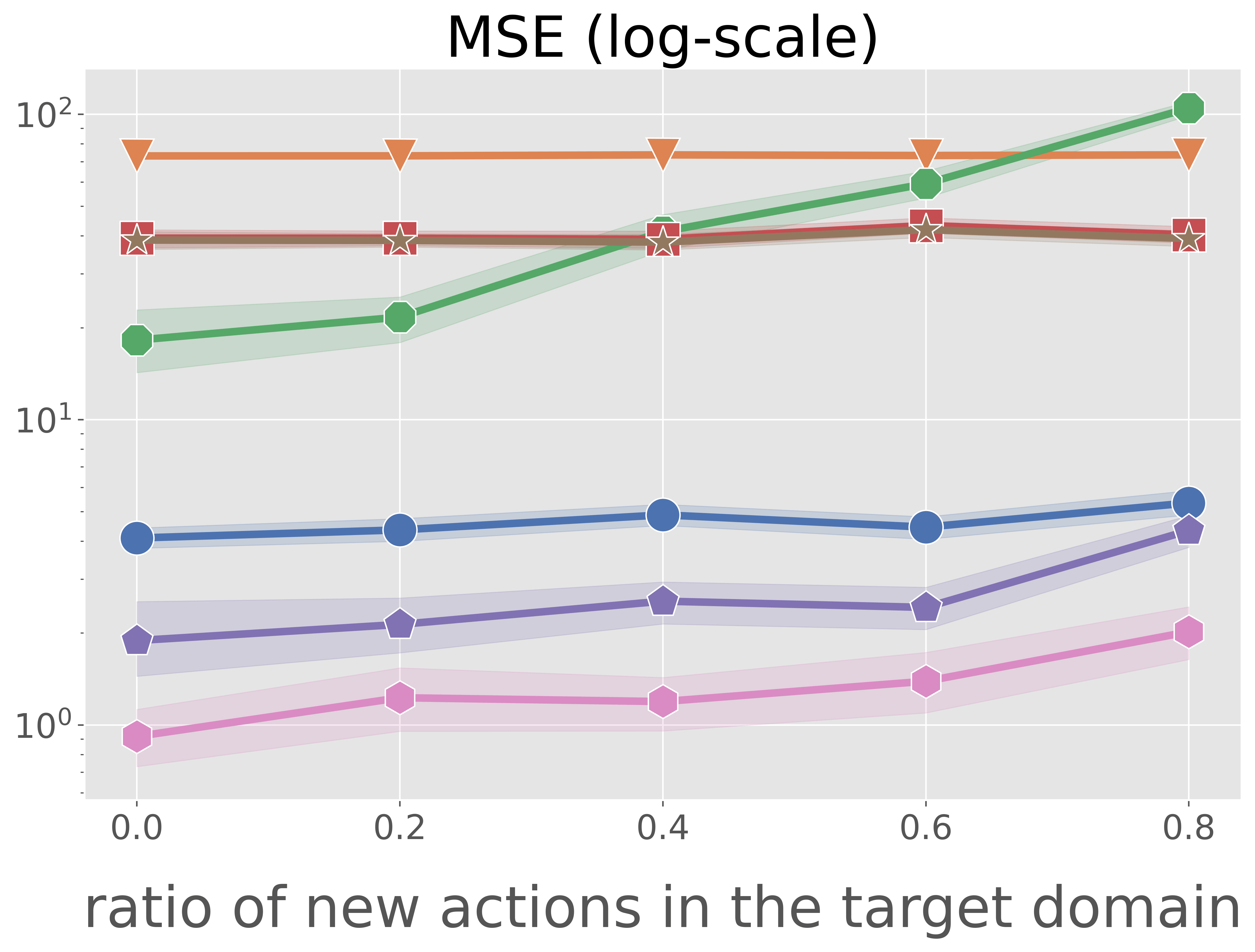}
\end{minipage}
\begin{minipage}[b]{0.325\columnwidth}
    \centering
    \includegraphics[width=1.0\columnwidth]{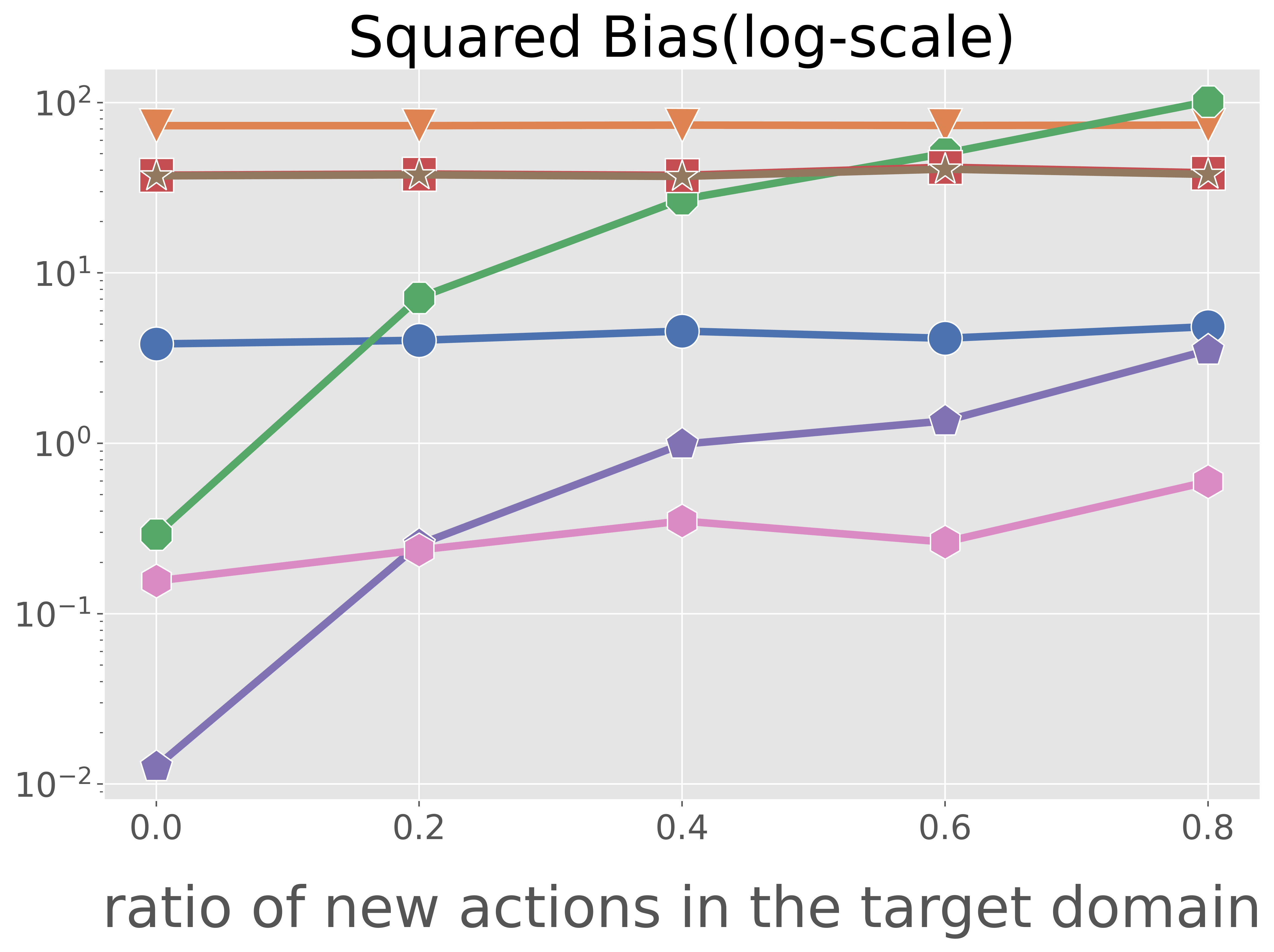}
\end{minipage}
\begin{minipage}[b]{0.325\columnwidth}
    \centering
    \includegraphics[width=1.0\columnwidth]{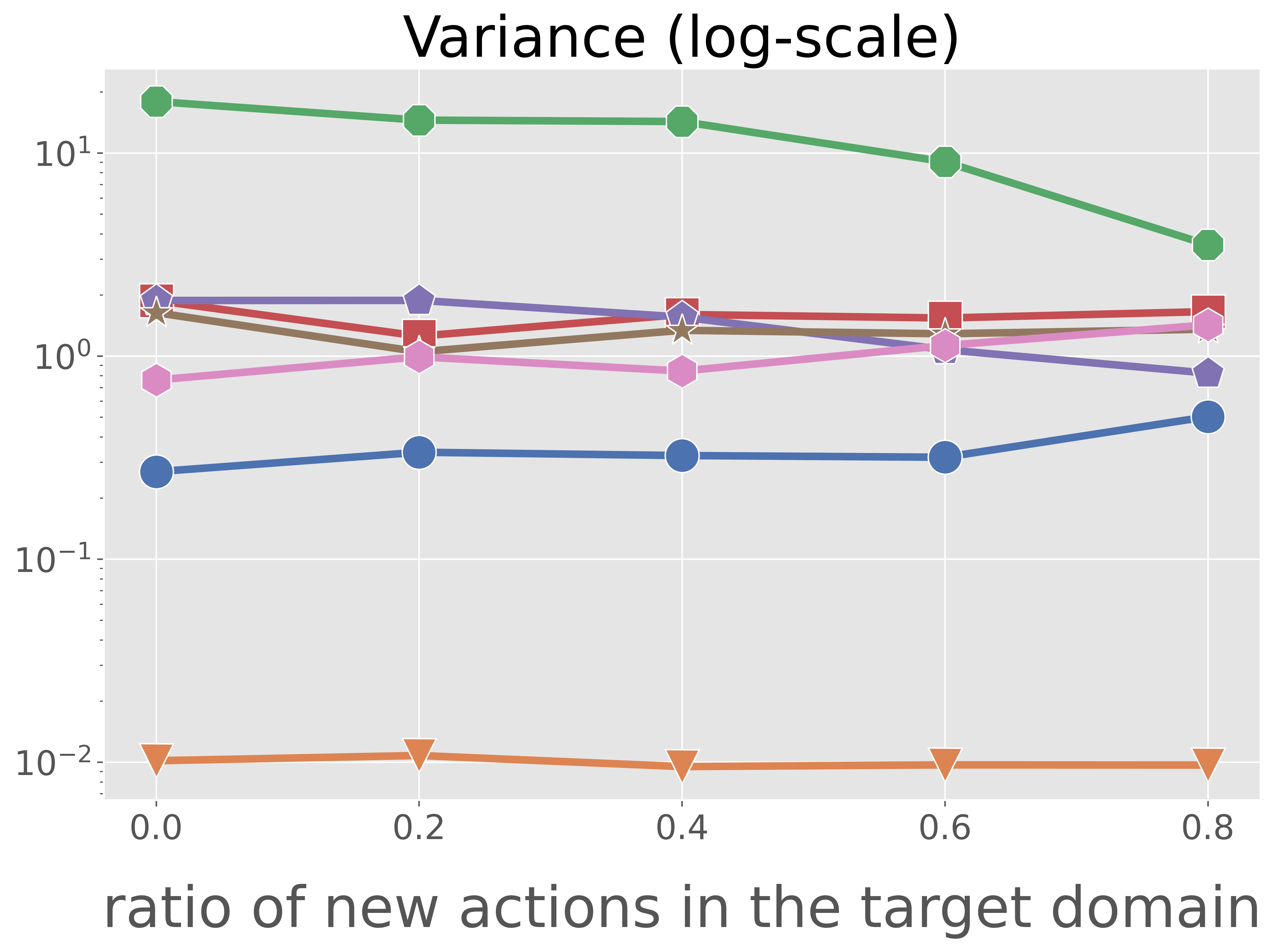}
\end{minipage}
\caption{MSE(\textbf{left}), Squared Bias(\textbf{center}), and Variance(\textbf{right}) with varying ratios of new actions in the target domain.} \vspace{2mm}
\label{fig8}
\begin{minipage}[b]{0.325\columnwidth}
    \centering
    \includegraphics[width=1.0\columnwidth]{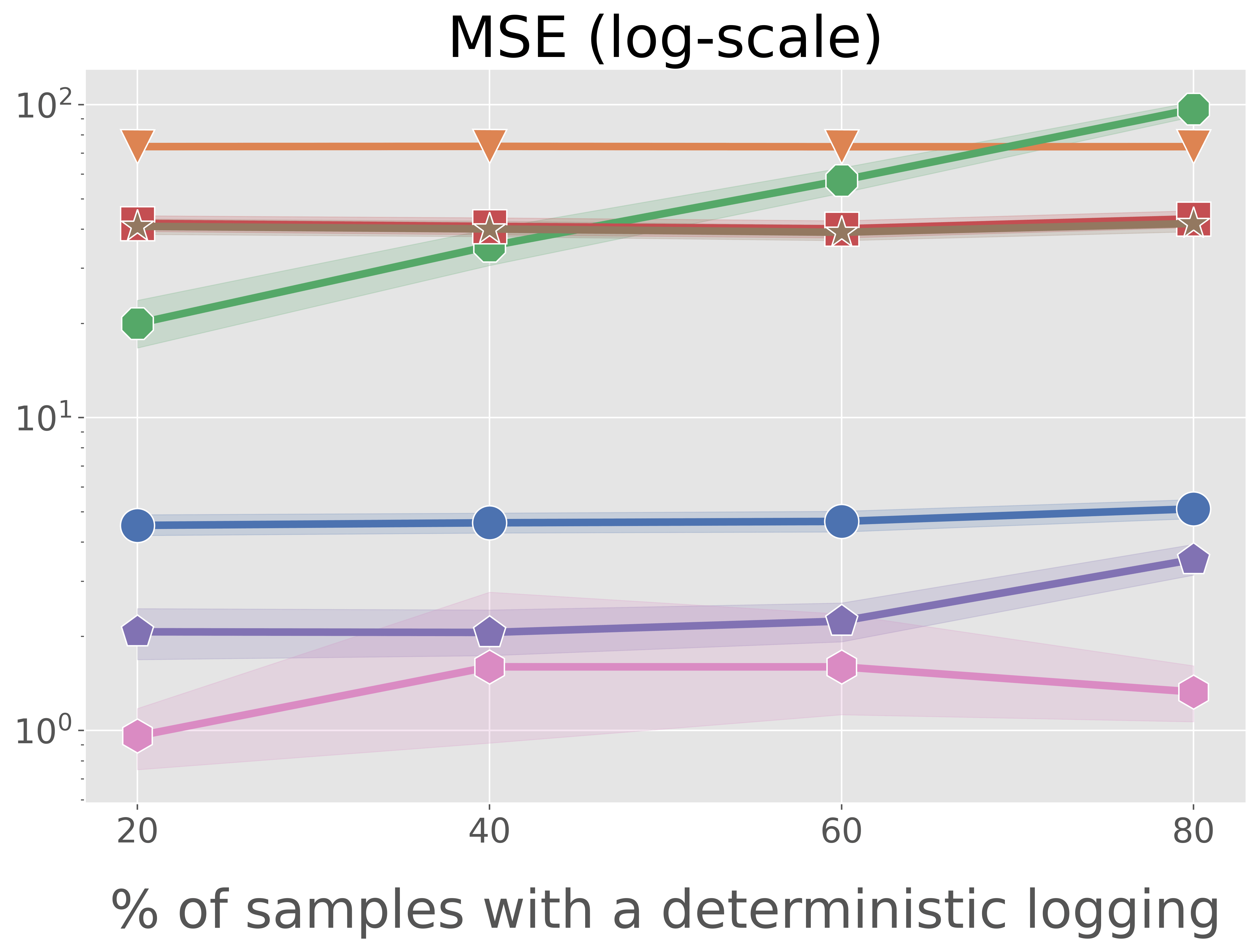}
\end{minipage}
\begin{minipage}[b]{0.325\columnwidth}
    \centering
    \includegraphics[width=1.0\columnwidth]{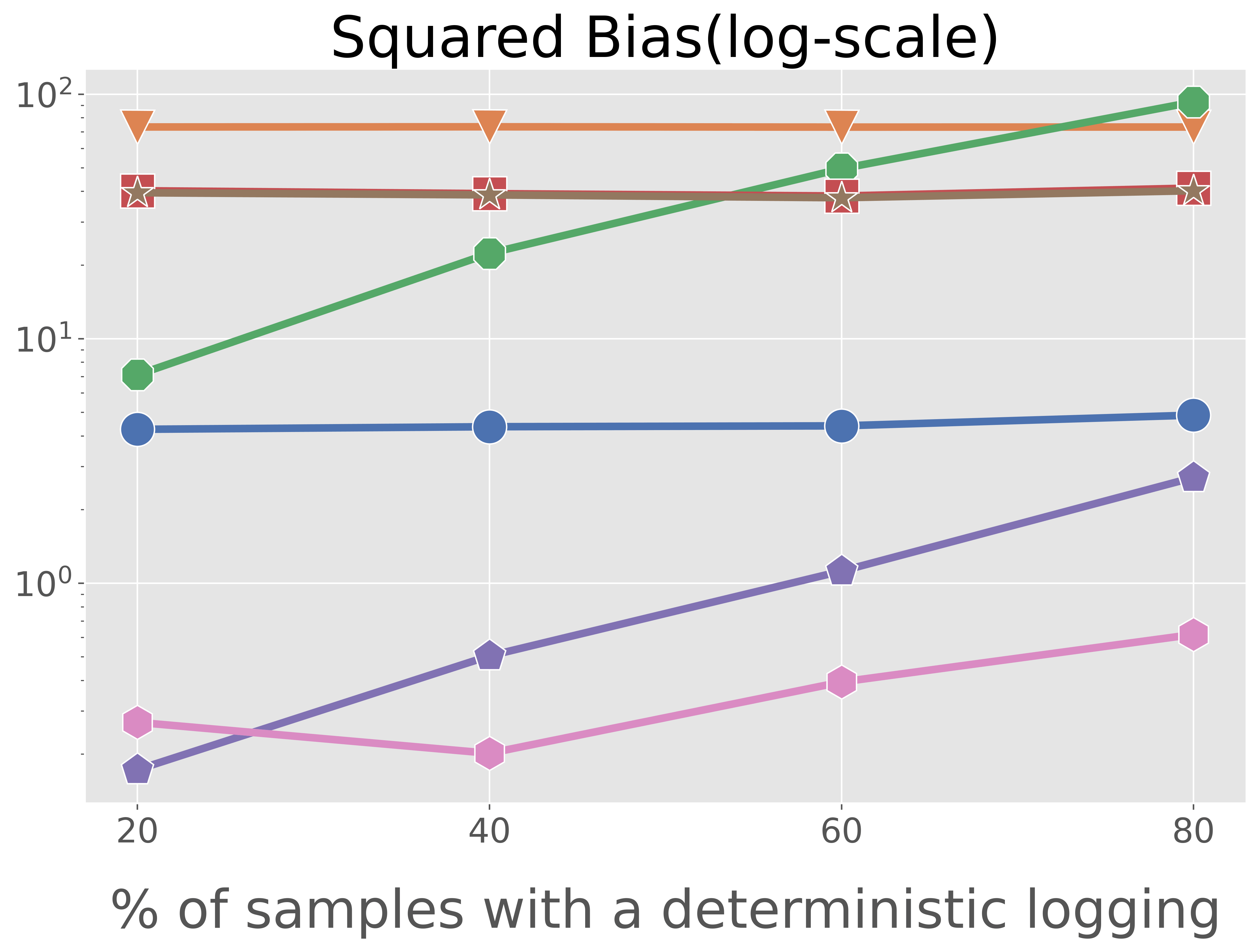}
\end{minipage} 
\begin{minipage}[b]{0.325\columnwidth}
    \centering
    \includegraphics[width=1.0\columnwidth]{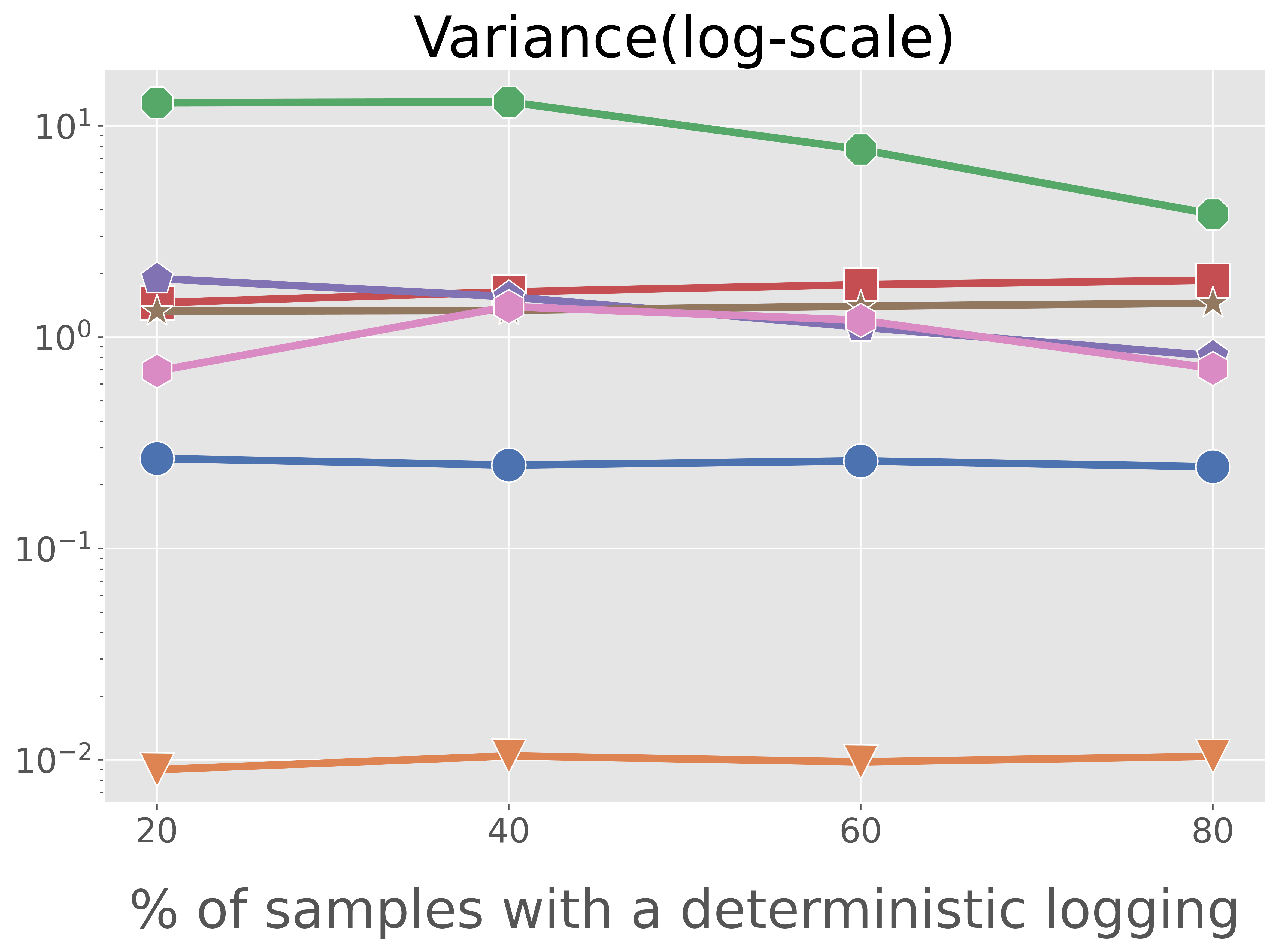}
\end{minipage}
\caption{MSE(\textbf{left}), Squared Bias(\textbf{center}), and Variance(\textbf{right}) with varying percentages of samples with a deterministic logging.} \vspace{2mm}
\label{fig9}
\begin{minipage}[b]{0.325\columnwidth}
    \centering
    \includegraphics[width=1.0\columnwidth]{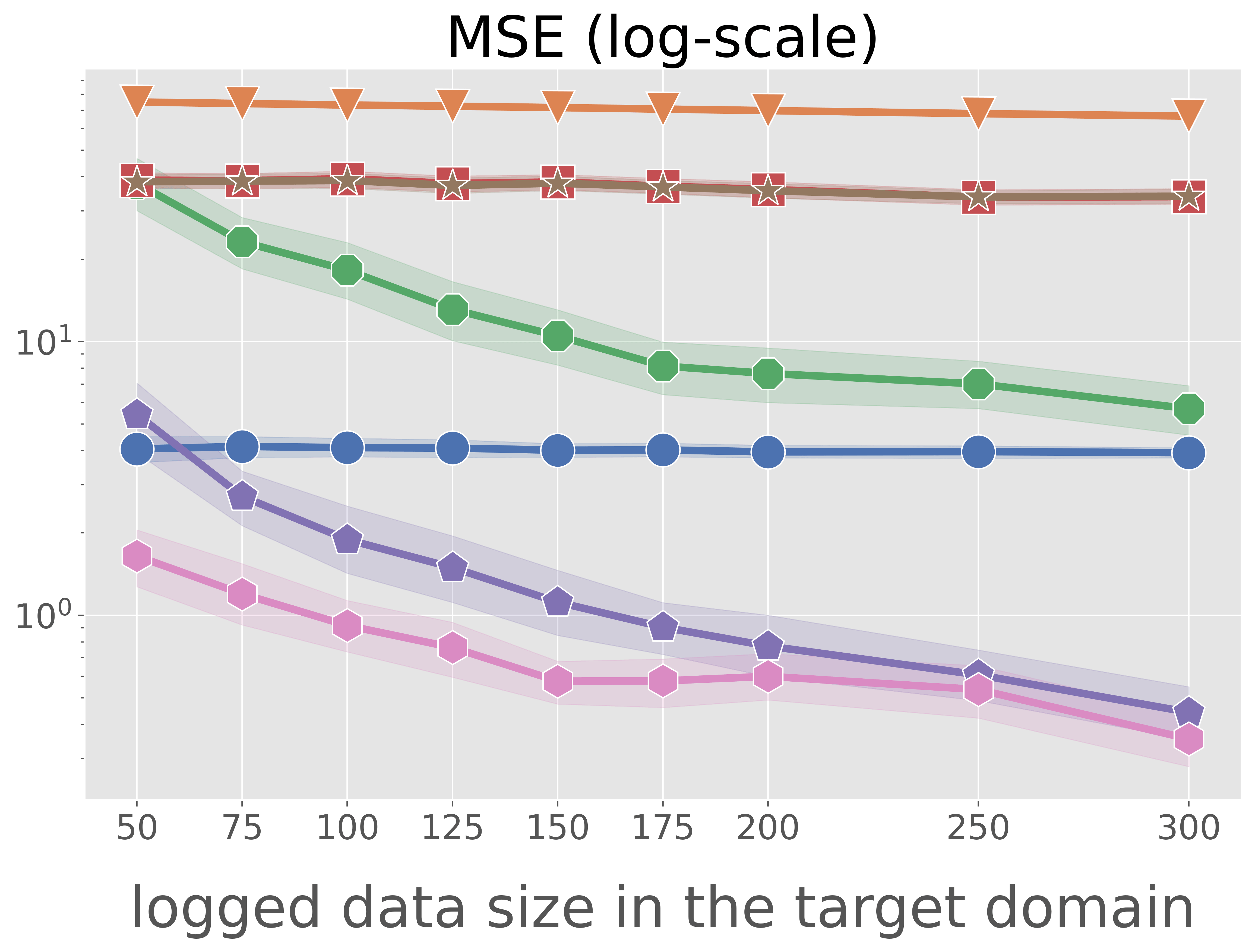}
\end{minipage}
\begin{minipage}[b]{0.325\columnwidth}
    \centering
    \includegraphics[width=1.0\columnwidth]{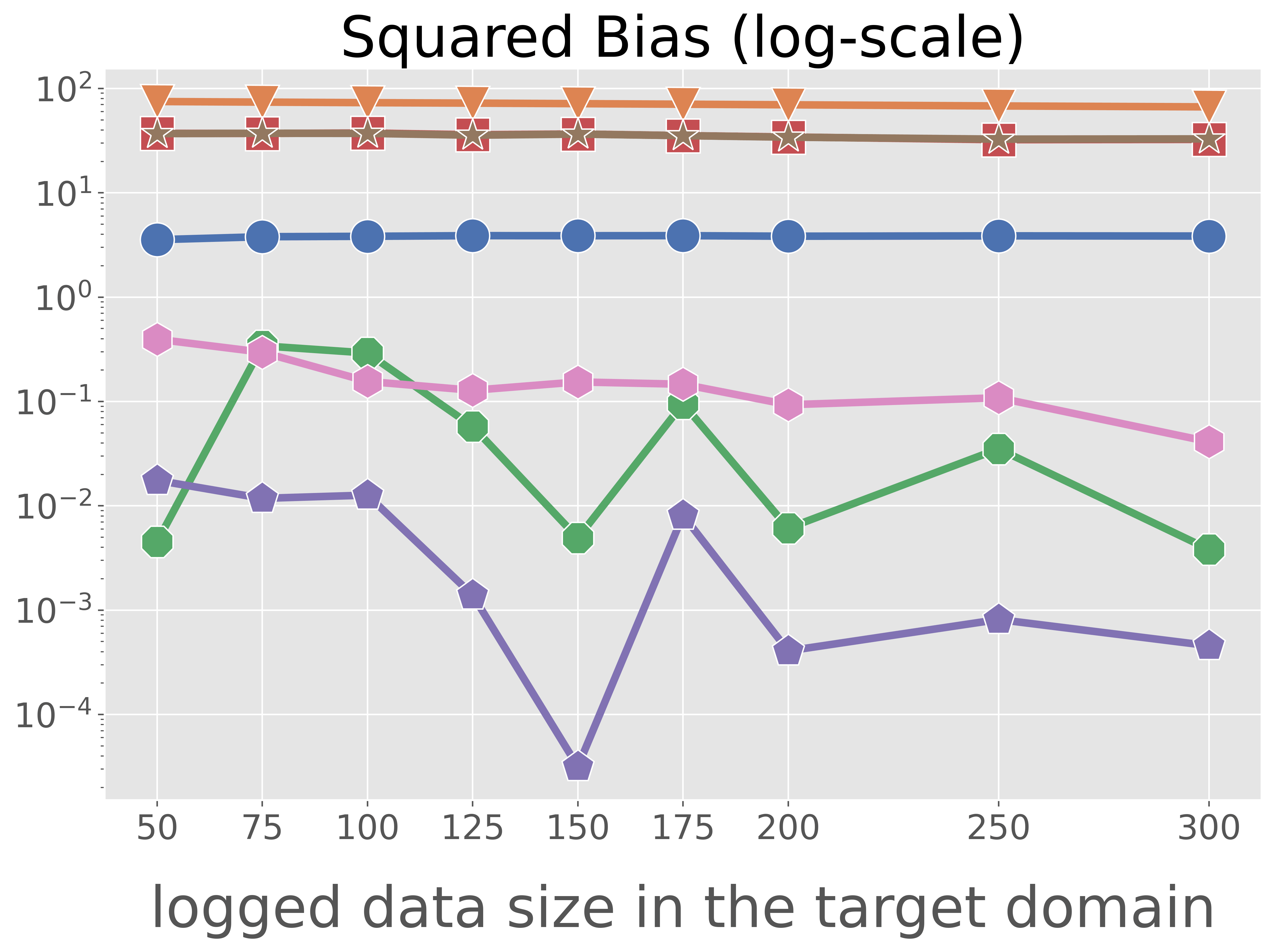}
\end{minipage}
\begin{minipage}[b]{0.325\columnwidth}
    \centering
    \includegraphics[width=1.0\columnwidth]{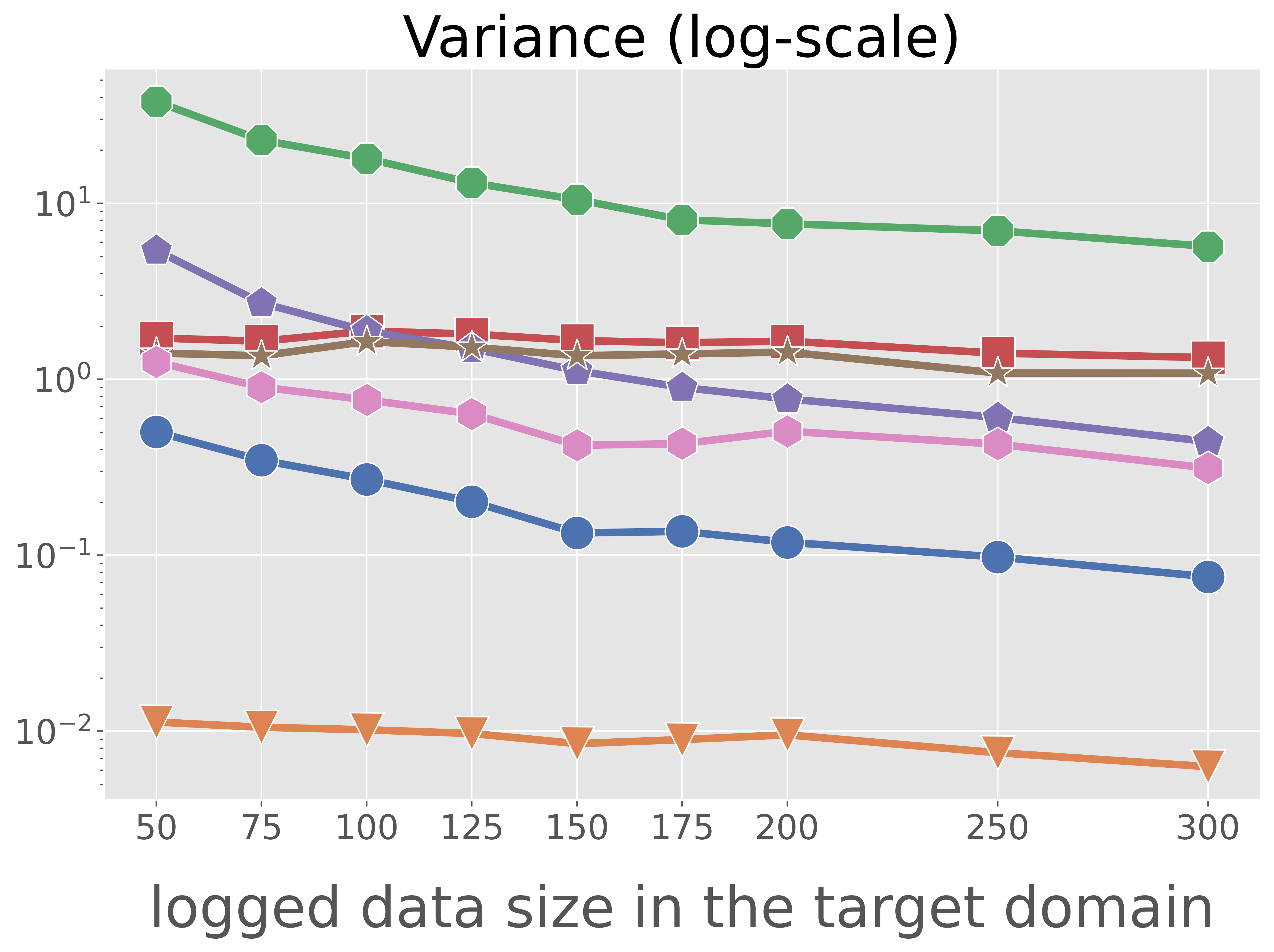}
\end{minipage}
\caption{MSE(\textbf{left}), Squared Bias(\textbf{center}), and Variance(\textbf{right}) with varying logged data sizes in the target domain. }
\label{fig10}
\end{figure}

%% file: additional_results.tex
\begin{table}[h]
\begin{minipage}{\textwidth}
    \centering
    \caption{MSE in OPE for varying ratios of new actions in the target domain} \vspace{1mm}
    \begin{tabular}{c|cccccc}
        \toprule
        ratios of new actions & 0.0   & 0.2    & 0.4    & 0.6    & 0.8  \\
        \midrule\midrule
        \textbf{COPE(Ours)}       & \textbf{1.1641} & \textbf{1.1647} & \textbf{0.8066} & \textbf{1.0663} & \textbf{1.0801} \\
        DR(Target Domain)        & 4.3283 & 4.2225 & 2.3985 & 2.9498 & 3.7964 \\
        DR(ALL Domain)    & 2.7582 & 2.2401 & 3.5354 & 3.3172 & 1.4530 \\
        IPS(Target Domain)       & 5.6442 & 4.9797 & 3.6800 & 4.8295 & 6.5302 \\
        IPS(ALL Domain)   & 2.1763 & 2.2538 & 2.2006 & 2.1881 & 2.9360 \\
        \hline
    \end{tabular}
    \label{table:ope-new-actions}
    \vspace{-2pt}
\end{minipage} \\
\begin{minipage}{\textwidth}
    \centering
    \caption{MSE in OPE for varying numbers of users with deterministic logging in the target domain} \vspace{1mm}
   \begin{tabular}{c|cccccc}
        \toprule
        numbers of users with deterministic logging    & 200    & 400    & 600    & 800    & 1000 \\
        \midrule\midrule
        \textbf{COPE(Ours)}      & \textbf{1.3238} & \textbf{1.3120} & \textbf{1.2926} & \textbf{1.2125} & \textbf{1.2172} \\
         DR(Target Domain)        & 2.3200 & 2.8398 & 3.2755 & 3.7992 & 4.4499 \\
        DR(ALL Domain)    & 2.5240 & 2.7915 & 3.0358 & 2.9843 & 3.3314 \\
        IPS(Target Domain)       & 3.2026 & 3.9641 & 4.6112 & 5.6947 & 7.0049 \\
        IPS(ALL Domain)   & 2.1561 & 2.1630 & 2.1906 & 2.2188 & 2.2076 \\
        \bottomrule
    \end{tabular}
    \label{table:ope-deterministic-logging}
    \vspace{-2pt}
\end{minipage} \\
\begin{minipage}{\textwidth}
    \centering
    \caption{Test (Relative) Policy Value in OPL for varying ratios of new actions in the target domain} \vspace{1mm}
    \begin{tabular}{c|cccccc}
        \toprule
        ratios of new actions & 0.0   & 0.2    & 0.4    & 0.6    & 0.8  \\ 
        \midrule\midrule
        \textbf{COPE-PG(Ours)}      & \textbf{1.4463} & \textbf{1.4060} & \textbf{1.4389} & \textbf{1.3884} & 1.4090 \\
        DR-PG(Target Domain)        & 1.3518 & 1.3220 & 1.3364 & 1.2663 & 1.3339 \\
        DR-PG(ALL Domain)    & 1.4308 & 1.3947 & 1.4064 & 1.3420 & \textbf{1.4403} \\
        IPS-PG(Target Domain)        & 1.3119 & 1.2755 & 1.2899 & 1.2389 & 1.3350 \\
        IPS-PG(ALL Domain)   & 1.3588 & 1.3198 & 1.3267 & 1.3134 & 1.3246 \\
        \bottomrule
    \end{tabular}
    \label{table:opl-new-actions}
    \vspace{-2pt}
\end{minipage} \\
\begin{minipage}{\textwidth}
    \centering
    \caption{Test (Relative) Policy Value in OPL for varying numbers of users with deterministic logging in the target domain} \vspace{1mm}
    \begin{tabular}{c|cccccc} 
        \toprule
        numbers of users with deterministic logging    & 200    & 400    & 600    & 800    & 1000 \\
        \midrule\midrule
        \textbf{COPE-PG(Ours)}      & \textbf{1.4408} & \textbf{1.4531} & \textbf{1.4769} & \textbf{1.4669} & 1.4139 \\
        DR-PG(ALL Domain)   & \textbf{1.4477} & 1.4180 & 1.3954 & 1.4059 & \textbf{1.5115}  \\
        DR-PG(Target Domain)     & 1.3101 & 1.3014 & 1.3091 & 1.3596 & 1.3013 \\
        IPS-PG(ALL Domain)  & 1.3381 & 1.3003 & 1.3149 & 1.3625 & 1.4651 \\
        IPS-PG(Target Domain)    & 1.2795 & 1.2770 & 1.2905 & 1.2765 & 1.2467 \\
        \bottomrule
    \end{tabular}
    \label{table:opl-deterministic-logging}
    \vspace{-2pt}
\end{minipage}
\begin{minipage}{\textwidth}
    \centering
    \caption{Test (Relative) Policy Value in OPL for varying $\lambda$ values} \vspace{1mm}
    \begin{tabular}{c|ccccccc}
        \toprule
        lambda values   & 0.0    &  0.2    & 0.4  & 0.6 & 0.8 & 1.0 \\
        \midrule\midrule
        \textbf{COPE-PG(Ours)}      & \textbf{1.4713}    & \textbf{1.4646}    & \textbf{1.4739}    & \textbf{1.4502}    & \textbf{1.4780} & \textbf{1.4404} \\
        DR-PG(ALL Domain)   &  1.3812    & 1.3558    & 1.3475    & 1.3592    & 1.3042    & 1.3657\\
        DR-PG(Target Domain)  & 1.4639    & 1.4334    & 1.3823    & 1.4374    & 1.3956    & 1.2799\\
        IPS-PG(ALL Domain)  & 1.3553    & 1.3010    & 1.3133    & 1.2869    & 1.2770    & 1.2755 \\
        IPS-PG(Target Domain)  & 1.3629    & 1.3580    & 1.3572    & 1.3480    & 1.4390    & 1.3889 \\
        \bottomrule
    \end{tabular}
    \label{table:opl-lambda}
    \vspace{-2pt}
\end{minipage}
\end{table}

%% file: iclr2025_conference.bib
@inproceedings{peng2023offline,
  title={Offline policy evaluation in large action spaces via outcome-oriented action grouping},
  author={Peng, Jie and Zou, Hao and Liu, Jiashuo and Li, Shaoming and Jiang, Yibao and Pei, Jian and Cui, Peng},
  booktitle={Proceedings of the ACM Web Conference 2023},
  pages={1220--1230},
  year={2023},
  organization={ACM}
}

@inproceedings{gabbianelli2024importance,
  title={Importance-weighted offline learning done right},
  author={Gabbianelli, Germano and Neu, Gergely and Papini, Matteo},
  booktitle={Proceedings of the 35th International Conference on Algorithmic Learning Theory},
  volume={237},
  pages={614--634},
  year={2024},
  organization={PMLR}
}

@article{sakhi2024logarithmic,
  title={Logarithmic Smoothing for Pessimistic Off-Policy Evaluation, Selection and Learning},
  author={Sakhi, Otmane and Aouali, Imad and Alquier, Pierre and Chopin, Nicolas},
  journal={arXiv preprint arXiv:2405.14335},
  year={2024}
}

@inproceedings{london2019bayesian,
  title={Bayesian counterfactual risk minimization},
  author={London, Ben and Sandler, Ted},
  booktitle={Proceedings of the 36th International Conference on Machine Learning},
  volume={97},
  pages={4125--4133},
  year={2019},
  organization={PMLR}
}

@article{wu2021beta,
  title={$\beta$-Intact-VAE: Identifying and Estimating Causal Effects under Limited Overlap},
  author={Wu, Pengzhou and Fukumizu, Kenji},
  journal={arXiv preprint arXiv:2110.05225},
  year={2021}
}

@inproceedings{yang2024mlora,
  title={MLoRA: Multi-Domain Low-Rank Adaptive Network for CTR Prediction},
  author={Yang, Zhiming and Gao, Haining and Gao, Dehong and Yang, Luwei and Yang, Libin and Cai, Xiaoyan and Ning, Wei and Zhang, Guannan},
  booktitle={Proceedings of the 18th ACM Conference on Recommender Systems},
  pages={287--297},
  year={2024}
}

@article{hansen2008prognostic,
  title={The prognostic analogue of the propensity score},
  author={Hansen, Ben B},
  journal={Biometrika},
  volume={95},
  number={2},
  pages={481--488},
  year={2008},
  publisher={Oxford University Press}
}

@article{kanamori2012statistical,
  title={Statistical analysis of kernel-based least-squares density-ratio estimation},
  author={Kanamori, Takafumi and Suzuki, Taiji and Sugiyama, Masashi},
  journal={Machine Learning},
  volume={86},
  pages={335--367},
  year={2012},
  publisher={Springer}
}

@book{sugiyama2012density,
  title={Density ratio estimation in machine learning},
  author={Sugiyama, Masashi and Suzuki, Taiji and Kanamori, Takafumi},
  year={2012},
  publisher={Cambridge University Press}
}

@inproceedings{liu2024beyond,
  title={Beyond ood state actions: Supported cross-domain offline reinforcement learning},
  author={Liu, Jinxin and Zhang, Ziqi and Wei, Zhenyu and Zhuang, Zifeng and Kang, Yachen and Gai, Sibo and Wang, Donglin},
  booktitle={Proceedings of the 38th AAAI Conference on Artificial Intelligence},
  volume={38},
  number={12},
  pages={13945--13953},
  year={2024}
}

@misc{owen2013monte,
  title={Monte Carlo theory, methods and examples},
  author={Owen, Art B},
  year={2013},
  publisher={Stanford}
}

@article{metelli2021subgaussian,
  title={Subgaussian and differentiable importance sampling for off-policy evaluation and learning},
  author={Metelli, Alberto Maria and Russo, Alessio and Restelli, Marcello},
  journal={Advances in Neural Information Processing Systems},
  volume={34},
  year={2021}
}

@inproceedings{saito2023off,
title={Off-policy evaluation for large action spaces via conjunct effect modeling},
author={Saito, Yuta and Ren, Qingyang and Joachims, Thorsten},
booktitle={Proceedings of the 40th International Conference on Machine learning},
pages={29734--29759},
year={2023},
organization={PMLR}
}

@inproceedings{gao2022kuairec,
  title={Kuairec: A fully-observed dataset and insights for evaluating recommender systems},
  author={Gao, Chongming and Li, Shijun and Lei, Wenqiang and Chen, Jiawei and Li, Biao and Jiang, Peng and He, Xiangnan and Mao, Jiaxin and Chua, Tat-Seng},
  booktitle={Proceedings of the 31st ACM International Conference on Information and Knowledge Management},
  pages={540--550},
  year={2022}
}

@article{breiman2001random,
  title={Random forests},
  author={Breiman, Leo},
  journal={Machine learning},
  volume={45},
  pages={5--32},
  year={2001},
  publisher={Springer}
}

@article{pedregosa2011scikit,
  title={Scikit-learn: Machine learning in Python},
  author={Pedregosa, Fabian and Varoquaux, Ga{\"e}l and Gramfort, Alexandre and Michel, Vincent and Thirion, Bertrand and Grisel, Olivier and Blondel, Mathieu and Prettenhofer, Peter and Weiss, Ron and Dubourg, Vincent and others},
  journal={The Journal of Machine Learning Research},
  volume={12},
  pages={2825--2830},
  year={2011},
  publisher={JMLR. org}
}

@article{newey2018cross,
  title={Cross-fitting and fast remainder rates for semiparametric estimation},
  author={Newey, Whitney K and Robins, James R},
  journal={arXiv preprint arXiv:1801.09138},
  year={2018}
}

@article{dudik2014doubly,
  title={Doubly robust policy evaluation and optimization},
  author={Dud{\'\i}k, Miroslav and Erhan, Dumitru and Langford, John and Li, Lihong},
  year={2014}
}

@article{horvitz1952generalization,
  title={A generalization of sampling without replacement from a finite universe},
  author={Horvitz, Daniel G and Thompson, Donovan J},
  journal={Journal of the American Statistical Association},
  volume={47},
  number={260},
  pages={663--685},
  year={1952},
  publisher={Taylor \& Francis}
}

@inproceedings{saito2022off,
  title={Off-Policy Evaluation for Large Action Spaces via Embeddings},
  author={Saito, Yuta and Joachims, Thorsten},
  booktitle={Proceedings of the 39th International Conference on Machine Learning},
  pages={19089--19122},
  year={2022},
  organization={PMLR}
}

@inproceedings{sachdeva2020off,
  title={Off-policy bandits with deficient support},
  author={Sachdeva, Noveen and Su, Yi and Joachims, Thorsten},
  booktitle={Proceedings of the 26th ACM SIGKDD International Conference on Knowledge Discovery \& Data Mining},
  pages={965--975},
  year={2020},
  organization={ACM}
}

@article{felicioni2022off,
  title={Off-Policy Evaluation with Deficient Support Using Side Information},
  author={Felicioni, Nicolo and Ferrari Dacrema, Maurizio and Restelli, Marcello and Cremonesi, Paolo},
  journal={Advances in Neural Information Processing Systems},
  volume={35},
  year={2022}
}

@inproceedings{saito2021counterfactual,
    author = {Saito, Yuta and Joachims, Thorsten},
    title = {Counterfactual Learning and Evaluation for Recommender Systems: Foundations, Implementations, and Recent Advances},
    year = {2021},
    booktitle = {Proceedings of the 15th ACM Conference on Recommender Systems},
    pages = {828–830},
    organization={ACM}
}

@inproceedings{saito2021evaluating,
    author = {Saito, Yuta and Udagawa, Takuma and Kiyohara, Haruka and Mogi, Kazuki and Narita, Yusuke and Tateno, Kei},
    title = {Evaluating the Robustness of Off-Policy Evaluation},
    year = {2021},
    booktitle = {Proceedings of the 15th ACM Conference on Recommender Systems},
    pages = {114–123},
    organization={ACM}
}

@article{uehara2020off,
  title={Off-policy evaluation and learning for external validity under a covariate shift},
  author={Uehara, Masatoshi and Kato, Masahiro and Yasui, Shota},
  journal={Advances in Neural Information Processing Systems},
  volume={33},
  year={2020}
}

@inproceedings{kiyohara2024towards,
  title={Towards Assessing and Benchmarking Risk-Return Tradeoff of Off-Policy Evaluation},
  author={Kiyohara, Haruka and Kishimoto, Ren and Kawakami, Kosuke and Kobayashi, Ken and Nakata, Kazuhide and Saito, Yuta},
  booktitle={The Twelfth International Conference on Learning Representations},
  year={2024}
}

@inproceedings{gilotte2018offline,
  title={Offline a/b testing for recommender systems},
  author={Gilotte, Alexandre and Calauz{\`e}nes, Cl{\'e}ment and Nedelec, Thomas and Abraham, Alexandre and Doll{\'e}, Simon},
  booktitle={Proceedings of the Eleventh ACM International Conference on Web Search and Data Mining},
  pages={198--206},
  year={2018}
}

@inproceedings{mehrotra2018towards,
  title={Towards a fair marketplace: Counterfactual evaluation of the trade-off between relevance, fairness \& satisfaction in recommendation systems},
  author={Mehrotra, Rishabh and McInerney, James and Bouchard, Hugues and Lalmas, Mounia and Diaz, Fernando},
  booktitle={Proceedings of the 27th ACM International Conference on Information and Knowledge Management},
  pages={2243--2251},
  year={2018}
}

@article{uehara2022review,
  title={A review of off-policy evaluation in reinforcement learning},
  author={Uehara, Masatoshi and Shi, Chengchun and Kallus, Nathan},
  journal={arXiv preprint arXiv:2212.06355},
  year={2022}
}

@inproceedings{sachdeva2024off,
  title={Off-policy evaluation for large action spaces via policy convolution},
  author={Sachdeva, Noveen and Wang, Lequn and Liang, Dawen and Kallus, Nathan and McAuley, Julian},
  booktitle={Proceedings of the ACM on Web Conference 2024},
  pages={3576--3585},
  year={2024},
  organization={ACM}
}

@article{taufiq2024marginal,
  title={Marginal density ratio for off-policy evaluation in contextual bandits},
  author={Taufiq, Muhammad Faaiz and Doucet, Arnaud and Cornish, Rob and Ton, Jean-Francois},
  journal={Advances in Neural Information Processing Systems},
  volume={36},
  year={2023}
}

@inproceedings{kiyohara2023off,
  title={Off-policy evaluation of ranking policies under diverse user behavior},
  author={Kiyohara, Haruka and Uehara, Masatoshi and Narita, Yusuke and Shimizu, Nobuyuki and Yamamoto, Yasuo and Saito, Yuta},
  booktitle={Proceedings of the 29th ACM SIGKDD Conference on Knowledge Discovery and Data Mining},
  pages={1154--1163},
  year={2023}
}

@inproceedings{cief2024learning,
  title={Learning action embeddings for off-policy evaluation},
  author={Cief, Matej and Golebiowski, Jacek and Schmidt, Philipp and Abedjan, Ziawasch and Bekasov, Artur},
  booktitle={European Conference on Information Retrieval},
  pages={108--122},
  year={2024},
  organization={Springer}
}

@inproceedings{kiyohara2024off,
  title={Off-policy evaluation of slate bandit policies via optimizing abstraction},
  author={Kiyohara, Haruka and Nomura, Masahiro and Saito, Yuta},
  booktitle={Proceedings of the ACM on Web Conference 2024},
  pages={3150--3161},
  year={2024}
}

@inproceedings{agarwal2017effective,
  title={Effective evaluation using logged bandit feedback from multiple loggers},
  author={Agarwal, Aman and Basu, Soumya and Schnabel, Tobias and Joachims, Thorsten},
  booktitle={Proceedings of the 23rd ACM SIGKDD International Conference on Knowledge Discovery and Data Mining},
  pages={687--696},
  year={2017}
}

@inproceedings{su2020doubly,
  title={Doubly Robust Off-Policy Evaluation with Shrinkage},
  author={Su, Yi and Dimakopoulou, Maria and Krishnamurthy, Akshay and Dud{\'\i}k, Miroslav},
  booktitle={Proceedings of the 37th International Conference on Machine Learning},
  volume={119},
  pages={9167--9176},
  year={2020},
  organization={PMLR}
}

@inproceedings{su2020adaptive,
  title={Adaptive estimator selection for off-policy evaluation},
  author={Su, Yi and Srinath, Pavithra and Krishnamurthy, Akshay},
  booktitle={Proceedings of the 37th International Conference on Machine Learning},
  volume={119},
  pages={9196--9205},
  year={2020},
  organization={PMLR}
}

@inproceedings{wang2017optimal,
  title={Optimal and adaptive off-policy evaluation in contextual bandits},
  author={Wang, Yu-Xiang and Agarwal, Alekh and Dud{\i}k, Miroslav},
  booktitle={Proceedings of the 34th International Conference on Machine Learning},
  pages={3589--3597},
  year={2017},
  organization={PMLR}
}

@inproceedings{su2019cab,
  title={Cab: Continuous adaptive blending for policy evaluation and learning},
  author={Su, Yi and Wang, Lequn and Santacatterina, Michele and Joachims, Thorsten},
  booktitle={Proceedings of the 36th International Conference on Machine Learning},
  volume={97},
  pages={6005--6014},
  year={2019}, 
  publisher={PMLR}, 
}

@inproceedings{udagawa2023policy,
  title={Policy-adaptive estimator selection for off-policy evaluation},
  author={Udagawa, Takuma and Kiyohara, Haruka and Narita, Yusuke and Saito, Yuta and Tateno, Kei},
  booktitle={Proceedings of the AAAI Conference on Artificial Intelligence},
  volume={37},
  pages={10025--10033},
  year={2023}, 
  publisher={AAAI Press}, 
}

@article{felicioni2024autoope,
  title={AutoOPE: Automated Off-Policy Estimator Selection},
  author={Felicioni, Nicol{\`o} and Benigni, Michael and Dacrema, Maurizio Ferrari},
  journal={arXiv preprint arXiv:2406.18022},
  year={2024}
}

@article{cief2024cross,
  title={Cross-Validated Off-Policy Evaluation},
  author={Cief, Matej and Kompan, Michal and Kveton, Branislav},
  journal={arXiv preprint arXiv:2405.15332},
  year={2024}
}

@inproceedings{swaminathan2015counterfactual,
  title={Counterfactual risk minimization: Learning from logged bandit feedback},
  author={Swaminathan, Adith and Joachims, Thorsten},
  booktitle={Proceedings of the 32nd International Conference on Machine Learning},
  pages={814--823},
  volume={37},
  year={2015},
  organization={PMLR}
}

@article{swaminathan2015self,
  title={The self-normalized estimator for counterfactual learning},
  author={Swaminathan, Adith and Joachims, Thorsten},
  journal={Advances in Neural Information Processing Systems},
  volume={28},
  year={2015}
}
